\newtheorem{theorem}{Theorem}
\newtheorem{lemma}[theorem]{Lemma}
\newtheorem{corollary}{Corollary}[theorem]
\title{\LARGE \bf
I-Planner: Intention-Aware Motion Planning\\
Using Learning Based Human Motion Prediction
}
\author{Jae Sung Park and Chonhyon Park and Dinesh Manocha\\
Department of Computer Science, UNC Chapel Hill, NC, USA \\
\{jaesungp, chpark, dm\}@cs.unc.edu \\
\url{http://gamma.cs.unc.edu/SafeMP} (video included)
\vspace*{-.1in}
}
\begin{document}

\maketitle
\thispagestyle{empty}
\pagestyle{empty}


\begin{abstract}
We present a motion planning algorithm to compute collision-free and smooth trajectories for high-DOF robots interacting with humans in a shared workspace.
Our approach uses offline learning of human actions along with temporal coherence to predict the human actions. Our intention-aware online planning algorithm uses the learned database to compute a reliable trajectory based on the predicted actions. We represent the predicted human motion using a Gaussian distribution and compute tight upper bounds on collision probabilities for safe motion planning.
We also describe novel techniques to account for noise in human motion prediction.
We highlight the performance of our planning algorithm in complex simulated scenarios and real world benchmarks with 7-DOF robot arms operating in a workspace with a human performing complex tasks.   
We demonstrate the benefits of our intention-aware planner in terms of computing safe trajectories in such uncertain environments.
\end{abstract}

\section{Introduction}

Motion planning algorithms are used to compute collision-free paths for robots among obstacles.
As robots are increasingly used in workspace with moving or unknown obstacles, it is important to develop reliable planning algorithms that can handle environmental uncertainty and the dynamic motions. In particular, we address the problem of planning safe and reliable motions for a robot that is working in environments with humans. As the humans move, it is important for the robots to predict the human actions and motions from sensor data and to compute appropriate trajectories.

\begin{figure}[ht]
  \centering
  \begin{subfigure}[t]{0.32\linewidth}
    \centering
    \includegraphics[width=\textwidth]{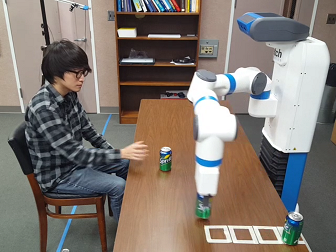}
  \end{subfigure}
  \begin{subfigure}[t]{0.32\linewidth}
    \centering
    \includegraphics[width=\textwidth]{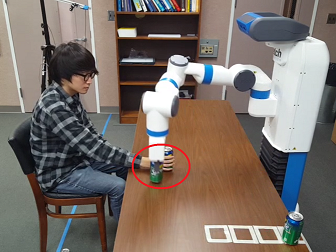}
    \caption{}
  \end{subfigure}
  \begin{subfigure}[t]{0.32\linewidth}
    \centering
    \includegraphics[width=\textwidth]{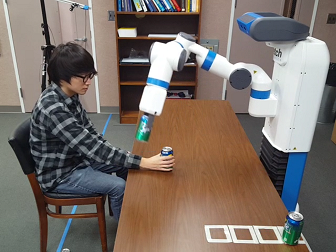}
  \end{subfigure}
  \\
  \begin{subfigure}[t]{0.32\linewidth}
    \centering
    \includegraphics[width=\textwidth]{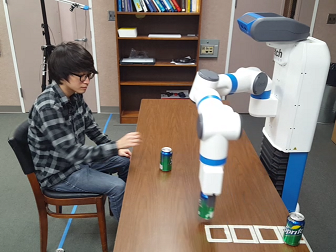}
  \end{subfigure}
  \begin{subfigure}[t]{0.32\linewidth}
    \centering
    \includegraphics[width=\textwidth]{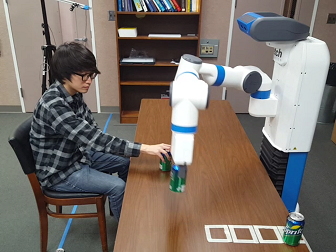}
    \caption{}
  \end{subfigure}
  \begin{subfigure}[t]{0.32\linewidth}
    \centering
    \includegraphics[width=\textwidth]{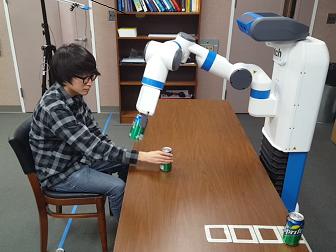}
  \end{subfigure}
  \caption{A 7-DOF Fetch robot is moving its arm near a human, avoiding collisions.
           (a) While the robot is moving, the human tries to move his arm to block the robot's path. The robot arm trajectory is planned without human motion prediction, which may result in collisions and a jerky trajectory, as shown with the red circle. This is because the robot cannot respond to the human motion to avoid collisions. 
           (b) The trajectory is computed using our human motion prediction algorithm; it avoids collisions and results in smoother trajectories. The robot trajectory computation uses collision probabilities to anticipate the motion and compute safe trajectories.  
           }
  \label{fig:result_real_robot}
\end{figure}
In order to compute reliable motion trajectories in such shared environments, it is important to gather the state of the humans as well as predict their motions. There is considerable work on online tracking and prediction of human motion in computer vision and related areas~\cite{shotton2013real}. However, the current state of the art in gathering motion data results in many challenges. First of all, there are errors in the data due to the sensors (e.g., point cloud sensors) or poor sampling~\cite{choo2014statistical}. Secondly, human motion can be sudden or abrupt and this can result in various uncertainties in terms of accurate representation of the environment. One way to overcome some of these problems is to use predictive or estimation techniques for human motion or actions, such as using various filters like Kalman filters or particle filters~\cite{vasquez2009growing}. Most of these prediction algorithms use a motion model that can predict future motion based on the prior positions of human body parts or joints, and corrects the error between the estimates and actual measurements.
In practice, these approaches only work well when there is sufficient information about prior motion that can be accurately modeled by the underlying motion model. 
In some scenarios, it is possible to infer high-level human intent using additional information, and thereby perform a better prediction of future human motions~\cite{bandyopadhyay2013intention,beraglmp}. These techniques are used to predict the pedestrian trajectories based on environmental information in 2D domains. 

\noindent {\bf Main Results:} We present a novel high-DOF motion planning approach to compute collision-free trajectories for robots operating in a workspace with human-obstacles or human-robot cooperating scenarios (I-Planner). Our approach is general, and doesn't make much assumptions about the environment or the human actions. We track the positions of the human using depth cameras and present a new method for human action prediction using combination of classification and regression methods. Given the sensor noises and prediction errors, our online motion planner uses probabilistic collision checking to compute a high dimensional robot trajectory that tends to compute safe motion in the presence of uncertain human motion.
As compared to prior methods, the main benefits of our approach include:
\begin{enumerate}
\item A novel data-driven algorithm for intention and motion prediction, given noisy point cloud data. Compared to prior methods, our formulation can account for big noise in skeleton tracking  in terms of human motion prediction.
\item An online high-DOF robot motion planner for efficient completion of collaborative human-robot tasks that uses upper bounds on collision probabilities to compute safe trajectories in challenging 3D workspaces. Furthermore, our trajectory optimization based on probabilistic collision checking results in smoother paths. 
\end{enumerate}
We highlight the performance of our algorithms in a simulator with a 7-DOF KUKA arm operating and in a real world setting with a 7-DOF Fetch robot arm in a workspace with a moving human performing cooperative tasks. We have evaluated its performance in some challenging or cluttered 3D environments where the human is  close to the robot and moving at varying speeds.
We demonstrate the benefits of our intention-aware planner in terms of computing safe trajectories in these scenarios. 
A preliminary version of this paper was published~\cite{park2017intention}.
As compared to~\cite{park2017intention}, we improve the human motion prediction algorithm using depth sensor data. We present a mathematical analysis on the robustness of our prediction algorithm and highlight its benefits on challenging scenarios in terms of improved accuracy. We also analyze the performance of our algorithm with varying human motion speeds.

The rest of paper is organized as follows. 
In Section~\ref{sec:related}, we give a brief survey of prior work. 
Section~\ref{sec:overview} presents an overview of our human intention-aware motion planning algorithm. Offline learning and runtime prediction of human motions are described in Section~\ref{sec:prediction}, and these are combined with our optimization based motion planning algorithm in Section~\ref{sec:planning}.
The performance of the intention-aware motion planner is analyzed in Section~\ref{sec:analysis}.
Finally, we demonstrate the performance of our planning framework for a 7-DOF robot in Section~\ref{sec:results}.

\section{Related work}
\label{sec:related}

In this section, we give a brief overview of  prior work on human motion prediction, task planning for human-robot collaborations, and motion planning in environments shared with humans.

\subsection{Intention-aware motion planning and prediction}
Intention-Aware Motion Planning (IAMP) denotes a motion planning framework where the uncertainty of human intention is taken into account~\cite{bandyopadhyay2013intention}.
The goal position and the trajectory of moving pedestrians can be considered as human intention and used so that a moving robot can avoid pedestrians~\cite{unhelkar2015human}.

In terms of robot navigation among obstacles and pedestrians, accurate predictions of humans or other robot positions are possible based on crowd motion models~\cite{fulgenzi2007dynamic,van2008interactive} or integration of motion models with online learning techniques~\cite{kim2014brvo} for 2D scenarios and they are orthogonal to our approach.

Predicting the human actions or the high-DOF human motions has several challenges.
Estimated human poses from recorded videos or realtime sensor data tend to be inaccurate or imperfect due to occlusions or limited sensor ranges~\cite{choo2014statistical}.
Furthermore, the whole-body motions and their complex dynamics with many high-DOF makes it difficult to represent them with accurate motion models~\cite{hofmann2012multi}.
There has been a considerable literature on recognizing human actions~\cite{turaga2008machine}.
Machine learning-based algorithms using Gaussian Process Latent Variable Models (GP-LVM)~\cite{ek2007gaussian,urtasun20063d} or Recurrent neural network (RNN)~\cite{fragkiadaki2015recurrent} have been proposed to compute accurate human dynamics models.
Recent approaches use the learning of human intentions along with additional information, such as temporal relations between the actions~\cite{nikolaidis2013human,hawkins2013probabilistic} or object affordances~\cite{koppula2016anticipating} to improve the accuracy.
Inverse Reinforcement Learning (IRL) has been used to predict 2D motions~\cite{ziebart2008maximum,kuderer2012feature} or 3D human motions~\cite{dragan2013policy}.

\subsection{Robot task planning for human-robot collaboration}
In human-robot collaborative scenarios, robot task planning algorithms have been developed for the efficient distribution of subtasks.
One of their main goal is reducing the completion time of the overall task by interleaving subtasks of robot with subtasks of humans with well designed task plans.
In order to compute the best action policy for a robot, Markov Decision Processes (MDP) have been widely used~\cite{busoniu2008comprehensive}.
Nikolaidis et al.~\cite{nikolaidis2013human} use MDP models based on mental model convergence of human and robots.
Koppula and Saxena~\cite{koppula2016anticipatory} use Q-learning to train MDP models where the graph model has the transitions corresponding to the human action and robot action pairs.
P{\'e}rez-D'Arpino and Shah~\cite{perez2015fast} used a Bayesian learning algorithm on hand motion prediction and tested the algorithm in a human-robot collaboration tasks.
Our MDP models extend these approaches, but also take into account the issue of avoiding collisions between the human and the robot.

\subsection{Motion planning in environments shared with humans}

Prior work on motion planning in the context of human-robot interaction has focused on computing robot motions that satisfy cognitive constraints such as social acceptability~\cite{sisbot2007human} or being legible to humans~\cite{dragan2015effects}.

In human-robot collaboration scenarios where both the human and the robot perform manipulation tasks in a shared environment, it is important to compute robot motions that avoid collisions with the humans for safety reasons. 
Dynamic window approach~\cite{fox1997dynamic} (which searches the optimal velocity in a short time interval) and online replanning~\cite{SMP:2005,Park:2012:ICAPS,GPUITOMP} (which interleaves planning with execution) are widely used approaches for planning in such dynamic environments.
As there are uncertainties in the prediction model and in the sensors for human motion, the future pose is typically represented as the \emph{Belief} state, which corresponds to the probability distribution over all possible human states.
Mainprice and Berenson~\cite{mainprice2013human} explicitly construct an occupied workspace voxel map from the predicted Belief states of humans in the shared environment and avoid collisions.

Partially Observable Markov Decision Process (POMDP) techniques are widely used in motion planning with uncertainty in the robot state and the environment.
These approaches first estimate the robot state and the environment state, represent them in a probabilistic manner, and tend to compute the best action or the best robot trajectory considering likely possibilities.
Because the search space of exact POMDP formulation is too large, many practical and approximate POMDP solutions have been proposed~\cite{kurniawati2011motion,van2012motion} to reduce the running time and obtain almost realtime performance.
\cite{bai2015intention} use an approximate and realtime POMDP motion planner on autonomous driving carts. Our algorithm solves the problem in two steps: first the future human motion trajectory is predicted; next our planning algorithm generates a collision-free trajectory by taking into account the predicted trajectory. Our current formulation does not fully account for the uncertainty in the robot state and can be combined with POMDP approaches to handle this issue. 

\section{Overview}
\label{sec:overview}

In this section, we first introduce the notation and terminology used in the paper and give an overview of our motion planning algorithms.

\subsection{Notation and assumptions}
\label{subsec:notation_learning}

As we need to learn about human actions and short-term motions, a large training dataset of human motions is needed. 
We collect $N$ demonstrations of how human joints typically move while performing some tasks and in which order subtasks are performed.
Each demonstration is represented using $T^{(i)}$ time frames of human joint motion, where the superscript $(i)$ represents the demonstration index. 
The motion training dataset is represented as following:
\begin{itemize}
\item $\xi$ is a matrix of tracked human joint motions. 
      $\xi^{(i)}$ has $T^{(i)}$ columns, where a column vector represents the different human joint positions during each time frame.
\item $F$ is a feature vector matrix.
      $F^{(i)}$ has $T^{(i)}$ columns and is computed from $\xi^{(i)}$. 
\item $\mathbf{a}^h$ is a human action (or subtask) sequence vector that represents the action labels over different time frames. 
      For each time frame, the action is categorized into one of the $m^h$ discrete action labels, where the action label set is $A^h = \lbrace a_1^h, \cdots, a_{m^h}^h \rbrace$.
\item $ L = \lbrace (\xi^{(i)}, F^{(i)}, \mathbf{a}^{h,(i)} ) \rbrace _{i=1}^N $ is the motion database used for training.
      It consists of human joint motions, feature descriptors and the action labels at each time frame.
\end{itemize}

During runtime, MDP-based human action inference is used in our task planner.
The MDP model is defined as a tuple $(P, A^r, T)$:
\begin{itemize}
\item $A^r = \lbrace a_1^r, \cdots, a_{m^r}^r\rbrace$ is a robot action (or subtask) set of $m^r$ discrete action labels.
\item $P = \mathcal{P}(A^r \cup A^h)$, a power set of union of $A^r$ and $A^h$, is the set of states in MDP.
      We refer to the state $\mathbf{p} \in P$ as a \emph{progress state}  because each state represents which human and robot actions have been performed so far.
      We assume that the sequence of future actions for completing the entire task depends on the list of actions completed.
      $\mathbf{p}$ has $m^h + m^r$ binary elements, which represent corresponding human or robot actions have been completed ($\mathbf{p}_j = 1$) or not ($\mathbf{p}_j = 0$).
      For cases where same actions can be done more than once, the binary values can be replaced with integers, to count the number of actions performed.
\item $T : P \times A^r \rightarrow \Pi(P)$ is a transition function.
      When a robot performs an action $a^r$ in a state $\mathbf{p}$, $T(\mathbf{p}, a^r)$ is a probability distribution over the progress state set $P$.
      The probability of being state $\mathbf{p}'$ after taking an action $a^r$ from state $\mathbf{p}$ is denoted as $T(\mathbf{p}, a^r, \mathbf{p}')$.
\item $\pi : P \rightarrow A^r$ is the action policy of a robot.
      $\pi(\mathbf{p})$ denotes the best robot action that can be taken at state $\mathbf{p}$, which results in maximal performance.
\end{itemize}
We use the Q-learning~\cite{sutton1998reinforcement} to determine the best action policy during a given state, which rewards the values that are induced from the result of the execution. 

\subsection{Robot representation}
\label{subsec:notation_robot}

\begin{figure}[t]
  \centering
  \includegraphics[trim=0in 0in 0in 0in, clip=true, width=0.9\linewidth]{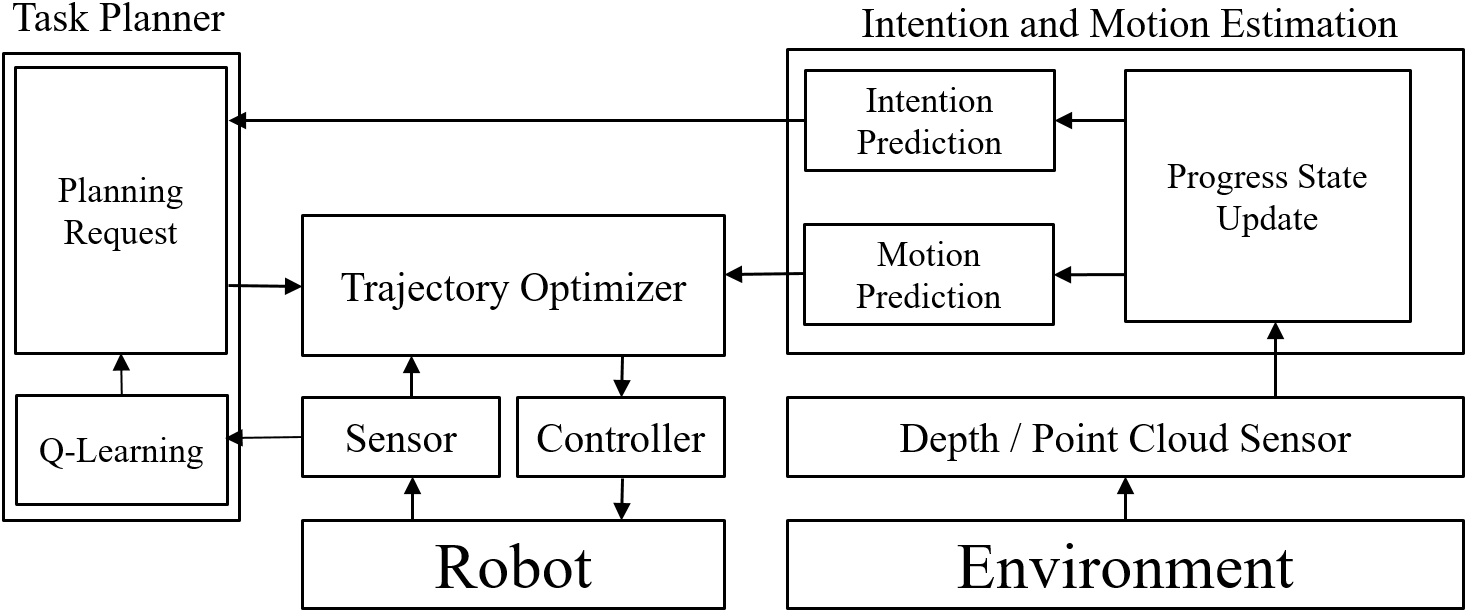}
  \caption{{\bf Overview of our Intention-Aware Planner:} {Our approach consists of three main components: task planner, trajectory optimization, and intention and motion estimation. }}
  \label{fig:architecture}
\end{figure}

We denote a single configuration of the robot as a vector $\mathbf{q}$ that consists of joint-angles. 
The $n$-dimensional space of configuration $\mathbf{q}$ is the configuration space $\mathcal{C}$.
We represent each link of the robot as $R_i$.
The finite set of bounding spheres for link $R_i$ is $\lbrace B_{i1}, B_{i2}, \cdots \rbrace$, and is used as a bounding volume of the link, i.e., $R_i \subset \cup_j B_{ij}$.
The links and bounding spheres at a configuration $\mathbf{q}$ are denoted as $R_i(\mathbf{q})$ and $B_{ij}(\mathbf{q})$, respectively.
In our benchmarks, where the robot arms are used, these bounding spheres are automatically generated using the medial axis of robot links. 
We also generate the bounding spheres $\lbrace C_1, C_2, \cdots \rbrace$ for humans and other obstacles.

For a planning task with start and goal configurations $\mathbf{q}_s$ and $\mathbf{q}_g$, the robot's trajectory is represented by a matrix $\mathbf{Q}$,
\begin{equation*}
\mathbf{Q} = \begin{bmatrix}
    \mathbf{q}_s & \mathbf{q}_1 & \cdots & \mathbf{q}_{n-1} & \mathbf{q}_g \\
    t_0 & t_1 & \cdots & t_{n-1} & t_n
\end{bmatrix},
\end{equation*}
where robot trajectory passes through the $n+1$ waypoints.
We denote the $i$-th waypoint of $\mathbf{Q}$ as $\mathbf{x}_i = \begin{bmatrix} \mathbf{q}_i^T & t_i \end{bmatrix}$.

\subsection{Online motion planning}

The main goals of our motion planner are: (1) planning high-level tasks for a robot by anticipating the most likely next human action and (2) computing a robot trajectory that reduces the probability of collision between the robot and the human or other obstacles, by using motion prediction.

At the high-level task planning step, we use MDP, which is used to compute the best action policies for each state.
A state of an MDP graph denotes the progress of the whole task.
The best action policies are determined through reinforcement learning with Q-learning.
Then, the best action policies are updated within the same state. The probability of choosing the action increases or decreases according to the reward function. 
Our reward computation function is affected by the prediction of intention and the delay caused by predictive collision avoidance.

We also estimate the short-term future motion from learned information in order to avoid future collisions.
From the joint position information, motion features are extracted based on human poses and surrounding objects related to human-robot interaction tasks, such as joint positions of humans, relative positions from a hand to other objects, etc.
The motions are classified over the human action set $A^h$.
For classifying the motions, we use Import Vector Machine (IVM)~\cite{zhu2012kernel} for classification and a Dynamic Time Warping (DTW)~\cite{muller2007dynamic} kernel function for incorporating the temporal information.
Given the human motions database of each action type, we train future motions using Sparse Pseudo-input Gaussian Process (SPGP)~\cite{snelson2005sparse}.
Combining these two prediction results, the final future motion is computed as the weighted sum over different action types weighed by the probability of each action type that could be performed next.
For example, if the action classification results in probability 0.9 for action \emph{Move forward} and 0.1 for action \emph{Move backward}, the future motion prediction (the results of SPGP) for \emph{Move forward} dominates. If the action classification results in probability 0.5 for both actions, the predicted future motions for each action class will be used in avoiding collisions but with weights $0.5$. However, in this case, the current motion does not have specific features to distinguish the action, meaning that the future motion in a short term will be similar and there will be an overlapped region in 3D space, working as a future motion of weight 1. More details are described in Section~\ref{sec:prediction}.

After deciding which robot task will be performed, the robot motion trajectory is then computed that tends to avoid collisions with humans.
An optimization-based motion planner~\cite{Park:2012:ICAPS} is used to compute a locally optimal solution that minimizes the objective function subject to many types of constraints such as robot related constraints (e.g., kinematic constraint), human motion related constraints (e.g., collision free constraint), etc.
Because future human motion is uncertain, we can only estimate the probability distribution of the possible future motions.
Therefore, we perform probabilistic collision checking to reduce the collision probability in future motions.
We also continuously track the human pose and update the predicted future motion to re-plan safe robot motions.
Our approach uses the notion of online probabilistic collision detection~\cite{pan2011probabilistic,parkfast,park2016efficient} between the robot and the point-cloud data corresponding to human obstacles, to compute reactive costs and integrate them with our optimization-based planner.

\section{Human action prediction}
\label{sec:prediction}

In this section, we describe our human action prediction algorithm, which consists of offline learning and online inference of actions.

\subsection{Learning of human actions and temporal coherence}
\label{subsec:offline}

\begin{figure}[t]
  \centering
  \begin{subfigure}[]{0.47\linewidth}
    \centering
    \includegraphics[height=1.3in]{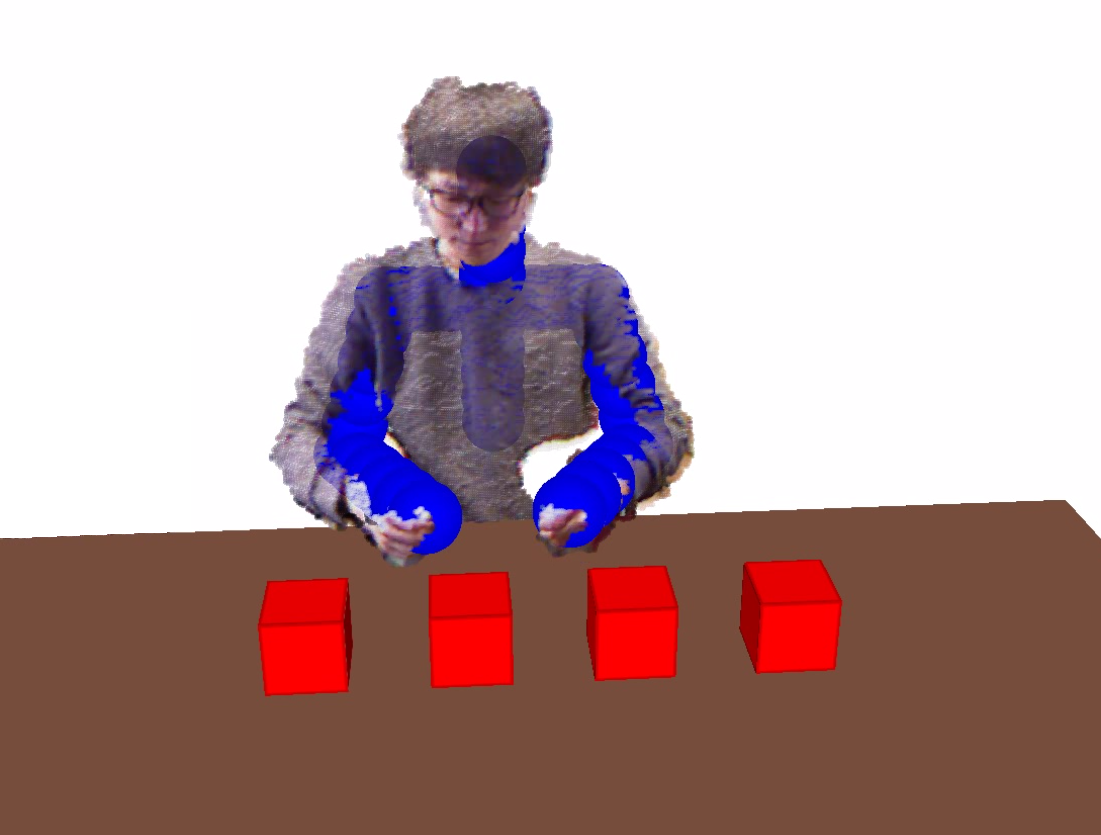}
    \caption{}
  \end{subfigure}
  \begin{subfigure}[]{0.47\linewidth}
    \centering
    \includegraphics[height=1.3in]{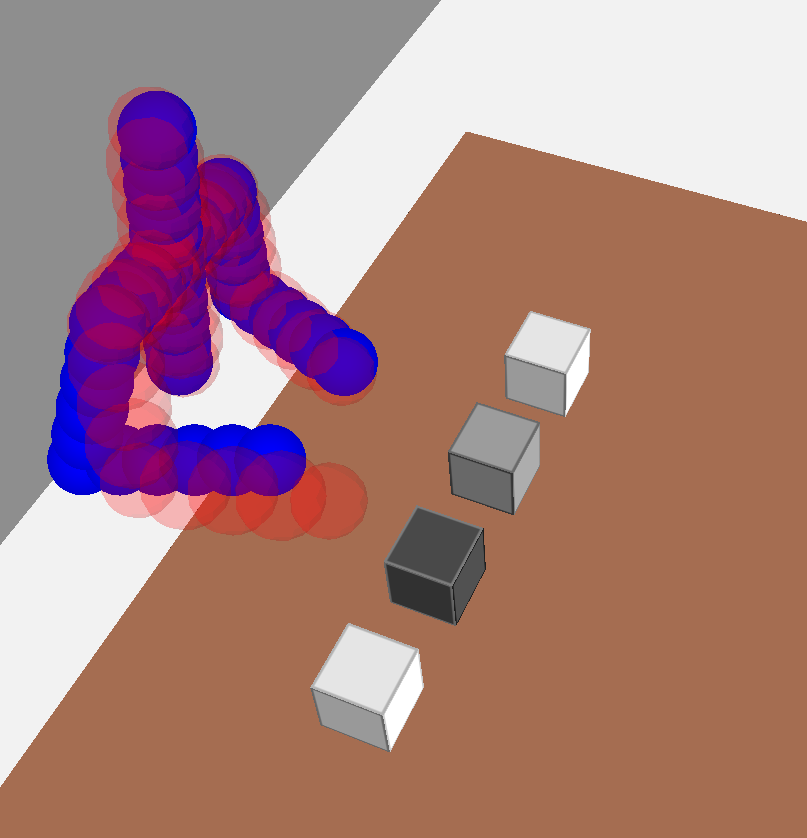}
    \caption{}
  \end{subfigure}
  \caption{{\bf Motion uncertainty and prediction:} 
           {(a) A point cloud and the tracked human (blue spheres). The joint positions are used as feature vectors.
           (b) Prediction of next human action and future human motion, where 4 locations are colored according to their probability of next human action from white (0\%) to black (100\%).
           Prediction of future motion after 1 second (red spheres) from current motion (blue spheres) is shown as performing the action: \textit{move right hand to the second position} which has the highest probability associated with it.}
           }
\end{figure}

We collect $N$ demonstrations to form a motion database $L$.
The 3D joint positions are tracked using OpenNI library~\cite{PrimeSense2010}, and their coordinates are concatenated to form a column vector $\xi^{(i)}$.
For full-body motion prediction, we used 21 joints, each of which has 3D coordinates tracked by OpenNI. So, $\xi^{(i)}$ is a 63-dimensional vector.
For upper-body motion prediction, there are 10 joints and thus $\xi^{(i)}$ is length 30.
Then, feature vector $F^{(i)}$ is derived from $\xi^{(i)}$. It has joint velocities and joint accelerations, as well as joint positions.

To learn the temporal coherence between the actions, we deal with only the human action sequences $\lbrace \mathbf{a}^{h,(i)} \rbrace _{i=1}^N $. 
Based on the progress state representation, for any time frame $s$, the prefix sequence of $\mathbf{a}^{h,(i)}$ of length $s$ yields a progress state $\mathbf{p}_s^{(i)}$ and the current action $c_s^{h,(i)} = \mathbf{a}_s^{h,(i)}$.
The next action label $n_s^{h,(i)}$ performed after frame $s$ can also be computed, at which the action label differs at the first time while searching in the increasing order from frame $s+1$.
Then, for all possible pairs of demonstrations and frame index $(i, s)$, the tuples $(\mathbf{p}_s^{(i)}, c_s^{h,(i)}, n_s^{h,(i)})$ are collected to compute histograms $h(n^h; \mathbf{p}, c^h)$, which counts the next action labels at each pair $(\mathbf{p}, c^h)$ that have appeared at least once.
We use the normalized histograms to estimate the next future action for the given $\mathbf{p}$ and $c^h$. i.e.,
\begin{equation}
p(n^h = a_j^h | \mathbf{p}, c^h) = \frac{h(a_j^h; \mathbf{p}, c^h)}{\sum\limits_{k=1}^m h(a_k^h; \mathbf{p}, c^h)} . \label{eq:next_action}
\end{equation}

In the worst case, there are at most $O(2^{m})$ progress states since there are $m$ binary values per action.
However, in practice, 
only $O(N \cdot m)$ progress states are generated. It is because the number of unique progress states are less than $m$, and the subtask order dependency may allow only a few possible topological orders.

In order to train the human motion, the motion sequence $\xi^{(i)}$ and the feature sequence $F^{(i)}$ are learned, as well as the action sequences $\mathbf{a}^{(i)}$.
Because we are interested in short-term human motion prediction for collision avoidance, we train the learning module from multiple short periods of motion.
Let $n_p$ be the number of previous consecutive frames and $n_f$ be the number of future consecutive frames to look up.
$n_f$ and $n_p$ are decided so that the length of motion is short-term motion (about 1 second) that will be used for short-term future collision detection in the robot motion planner.
At the time frame $s$, where $n_p \leq s \leq T^{(i)} - n_f$, the columns of feature matrix $F^{(i)}$ from column index $s - n_p + 1$ to $s$ are denoted as $F_{prev,s}^{(i)}$. Similarly, the columns of motion matrix from index $s+1$ to $s + n_f$ are denoted as $\xi_{next,s}^{(i)}$.

Tuples $(F_{prev,s}^{(i)}, \mathbf{p}_s^{(i)}, c_s^{h(i)}, \xi_{next,s}^{(i)})$ for all possible pairs of $(i, s)$ are collected as the training input.
They are partitioned into groups having the same progress state $\mathbf{p}$.
For each progress state $\mathbf{p}$ and current action $c^h$, the set of short-term motions are regressed using SPGP with the DTW kernel function~\cite{muller2007dynamic}, considering $\lbrace F_{prev} \rbrace$ as input and $\lbrace \xi_{next} \rbrace$ as multiple channeled outputs.
We use SPGPs, a variant of Gaussian Processes, because it significantly reduces the running time for training and inference by choosing $M$ pseudo-inputs from a large number of an original human motion inputs.
The final learned probability distribution is
\begin{align}
p(\xi_{next} | F_{prev}, \mathbf{p}, c^h) &= \prod_{c : \text{channels}} p(\xi_{next,c} | F_{prev}, \mathbf{p}, c^h), \nonumber \\
p(\xi_{next,c} | F_{prev}, \mathbf{p}, c^h) & \sim \mathcal{GP}(m_{c}, K_{c}) \label{eq:GPRlearning} ,
\end{align}
where $\mathcal{GP}(\cdot, \cdot)$ represents trained SPGPs, $c$ is an output channel (i.e., an element of matrix $\xi_{next}$), and $m_{c}$ and $K_{c}$ are the learned mean and covariance functions of the output channel $c$, respectively.

The current action label $c_s^{h,(i)}$ should be learned to estimate the current action.
We train $c_s^{h,(i)}$ using Tuples $(F_{prev,s}^{(i)}, \mathbf{p}_s^{(i)}, c_s^{h(i)})$.
For each state $\mathbf{p}$, we use Import Vector Machine (IVM) classifiers to compute the probability distribution:
\begin{align}
p(c^h = a_j^h | F_{prev}, \mathbf{p}) = \frac{\exp(f_j(F_{prev}))}{\sum_{a_k^h} \exp(f_k(F_{prev}))} \label{eq:IVMclassifier} ,
\end{align}
where $f_j(\cdot)$ is the learned predictive function~\cite{zhu2012kernel} of IVM.

\subsection{Runtime human intention and motion inference}
\label{subsec:runtime}

Based on the learned human actions, at runtime we infer the next most likely short-term human motion and human subtask for the purpose of collision avoidance and task planning, respectively.
The short-term future motion prediction is used for the collision avoidance during the motion planning.
The probability of future motion is given as:
\begin{align*}
p(\xi_{next} | F, \mathbf{p}) = \sum_{c^h \in A^h} p(\xi_{next}, c^h | F, \mathbf{p}).
\end{align*}
By applying the Bayes theorem, we get
\begin{align}
p(\xi_{next} | F, \mathbf{p}) = \sum_{c^h \in A^h} p(c^h | F, \mathbf{p}) p(\xi_{next} | F, \mathbf{p}, c^h) \label{eq:motion_prediction} .
\end{align}
The first term $p(c^h | F, \mathbf{p})$ is inferred through the IVM classifier in (\ref{eq:IVMclassifier}).
To infer the second term, we use the probability distribution in (\ref{eq:GPRlearning}) for each output channel.

We use Q-learning for training the best robot action policy $\pi$ in our MDP-based task planner.
We first define the function $Q : P \times A^r \rightarrow \rm I\!R$, which is iteratively trained with the motion planning executions.
$Q$ is updated as
\begin{align*}
Q_{t+1}(\mathbf{p}_t, a_t^r) =& (1 - \alpha_t) Q_t(\mathbf{p}_t, a_t^r) \\
&+ \alpha_t (R_{t+1} + \gamma \max_{a^r} Q_t(\mathbf{p}_{t+1}, a^r) ) ,
\end{align*}
where the subscripts $t$ and $t+1$ are the iteration indexes, $R_{t+1}$ is the reward function after taking action $a_t^r$ at state $\mathbf{p}_t$, and $\alpha_t$ is the learning rate, where we set $\alpha_t = 0.1$ in our experiments.
A reward value $R_{t+1}$ is determined by several factors:
\begin{itemize}
\item Preparation for next human action: 
      the reward gets higher when the robot runs an action before a human action which can be benefited by the robot's action.
      We define this reward as $R_{prep}(\mathbf{p}_t, a_t^r)$.
      Because the next human subtask depends on the uncertain human decision, we predict the likelihood of the next subtask from the learned actions in (\ref{eq:next_action}) and use it for the reward computation.
      The reward value is given as
      \begin{equation}
      R_{prep}(\mathbf{p}_t, a_t^r) = \sum_{a^h \in A^h} p(n^h = a^h | \mathbf{p}_t) H(a^h, a_t^r) ,
      \label{eq:reward}
      \end{equation}
      where $H(a^h, a^r)$ is a prior knowledge of reward, representing the amount of how much the robot helped the human by performing the robot action $a^r$ before the human action $a^h$.
      If the robot action $a^r$ has no relationship with $a^h$, the $H$ value is zero.
      If the robot helped, the $H$ value is positive, otherwise negative.
\item Execution delay: 
      There may be a delay in the robot motion's execution due to the collision avoidance with the human.
      To avoid collisions, the robot may deviate around the human and make it without delay.
      In this case the reward function is not affected, i.e. $R_{delay,t} = 0$.
      However, there are cases that the robot must wait until the human moves to another pose because the human can block the robot's path, which causes delay $d$.
      We penalize the amount of delay to the reward function, i.e. $R_{delay,t} = -d$.
      Note that the delay can vary during each iteration due to the human motion uncertainty.
\end{itemize}
The total reward value is a weighted sum of both factors:
\begin{align*}
R_{t+1} =& w_{prep} R_{prep}(\mathbf{p}_t, a_t^r) + w_{delay} R_{delay,t}(\mathbf{p}_t, a_t^r) ,
\end{align*}
where $w_{prep}$ and $w_{delay}$ are weights for scaling the two factors.
The preparation reward value is predefined for each action pairs.
The delay reward is measured during runtime.

\section{I-Planner: Intention-aware motion planning}
\label{sec:planning}

Out motion planner is based on an optimization formulation, where $n+1$ waypoints in the space-time domain $\mathbf{Q}$ define a robot motion trajectory to be optimized.
Specifically, we use an optimization-based planner, ITOMP~\cite{Park:2012:ICAPS}, that repeatedly refines the trajectory while interleaving the execution and motion planning for dynamic scenes.
We handle three types of constraints: smoothness constraint, static obstacle collision-avoidance, and dynamic obstacle collision avoidance.
To deal with the uncertainty of future human motion, we use probabilistic collision detection between the robot and the predicted future human pose.

Let $s$ be the current waypoint index, meaning that the motion trajectory is executed in the time interval $[t_0, t_s]$, and let $m$ be the replanning time step.
A cost function for collisions between the human and the robot can be given as:
\begin{align}
\sum_{i=s+m}^{s+2m} p \left( \bigcup_{j,k} B_{jk}(\mathbf{q}_i) \cap C_{dyn}(t_i) \neq \emptyset \right) \label{eq:collision_probability}
\end{align}
where $C_{dyn}(t)$ are the workspace volumes occupied by dynamic human obstacles at time $t$.
The trajectory being optimized during the time interval $[t_s, t_{s+m}]$ is executed during the next time interval $[t_{s+m}, t_{s+2m}]$.
Therefore, the future human poses are considered only in the time interval $[t_{s+m}, t_{s+2m}]$.

The collision probability between the robot and the dynamic obstacle at time frame $i$ in (\ref{eq:collision_probability}) can be computed as a maximum between bounding spheres:
\begin{align}
\max_{j,k,l} p \left( B_{jk}(\mathbf{q_i}) \cap C_{l}(t_i) \neq \emptyset \right), \label{eq:max_collision_probability}
\end{align}
where $C_l(t_i)$ denotes bounding spheres for a human body at time $t_i$ whose centers are located at line segments between human joints.
The future human poses $\xi_{future}$ are predicted in (\ref{eq:motion_prediction}) and the bounding sphere locations $C_l(t_i)$ are derived from it.
Note that the probabilistic distribution of each element in $\xi_{future}$ is a linear combination of current action proposal $p(c^h | F, \mathbf{p})$ and Gaussians $p(\xi_{future} | F, \mathbf{p}, c^h)$ over all $c^h$, i.e., (\ref{eq:max_collision_probability}) can be reformulated as
\begin{align*}
\max_{j,k,l} \sum_{c_h} p(c^h | F, \mathbf{p}) p \left( B_{jk}(\mathbf{q_i}) \cap C_{l}(t_i) \neq \emptyset \right) .
\end{align*}
Let $\mathbf{z}_l^1$ and $\mathbf{z}_l^2$ be the probability distribution functions of two adjacent human joints obtained from $\xi_{future}(t_i)$, where the center of $C_l(t_i)$ is located between them by a linear interpolation $C_l(t_i) = (1-u) \mathbf{z}_l^1 + u \mathbf{z}_l^2$ where $0 \leq u \leq 1$.
The joint positions follows Gaussian probability distributions:
\begin{align}
\mathbf{z}_l^i &\sim \mathcal{N}(\mu_l^i, \Sigma_l^i) \nonumber \\
\mathbf{c}_l(t_i) &\sim \mathcal{N}((1-u) \mu_l^1 + u \mu_l^2, (1-u)^2 \Sigma_l^1 + u^2 \Sigma_l^2) \\
&= \mathcal{N}(\mu_l, \Sigma_l) \label{eq:center_distribution} ,
\end{align}
where $\mathbf{c}_l(t_i)$ is the center of $C_l(t_i)$.
Thus, the collision probability between two bounding spheres is bounded by
\begin{align}
\int_{\rm I\!R^3} I( || \mathbf{x} - \mathbf{b}_{jk}(\mathbf{q}_i) ||^2 \leq (r_1 + r_2)^2 ) f(\mathbf{x}) d\mathbf{x} ,
\label{eq:sphere_collision}
\end{align}
where $\mathbf{b}_{jk}(\mathbf{q}_i)$ is the center of bounding sphere $B_{jk}(\mathbf{q}_i)$,
$r_1$ and $r_2$ are the radius of $B_{jk}(\mathbf{q}_i)$ and $C_l(t_i)$, respectively, $I(\cdot)$ is an indicator function, and $f(\mathbf{x})$ is the probability distribution function. The indicator function restricts the integral domain to a solid sphere, and $f(\mathbf{x})$ is the probability density function of $\mathbf{c}_l(t_i)$, in (\ref{eq:center_distribution}).
There is no closed form solution for (\ref{eq:sphere_collision}), therefore we use the maximum possible value to approximate the probability.
We compute $\mathbf{x}_{max}$ at which $f(\mathbf{x})$ is maximized in the sphere domain and multiply it by the volume of sphere, i.e.
\begin{align}
p \left( B_{jk}(\mathbf{q_i}) \cap C_{l}(t_i) \neq \emptyset \right) \leq \frac{4}{3} \pi (r_1 + r_2)^2 f(\mathbf{x}_{max}) \label{eq:approx_sphere_collision} .
\end{align}
Since even $\mathbf{x}_{max}$ does not have a closed form solution, we use the bisection method to find $\lambda$ with 
\begin{align*}
\mathbf x_{max}&=(\Sigma^{-1}+\lambda I)^{-1}(\Sigma^{-1}\mathbf p_{lm}+\lambda \mathbf o_{jk}(\mathbf q_i)),
\end{align*}
which is on the surface of sphere, explained in Generalized Tikhonov regularization~\cite{groetsch1984theory} in detail.

The collision probability, computed in (\ref{eq:sphere_collision}), is always positive due to the uncertainty of the future human position, and we compute a trajectory that is guaranteed to be nearly collision-free with sufficiently low collision probability.
For a user-specified confidence level $\delta_{CL}$, we compute a trajectory that its probability of collision is upper-bounded by $(1 - \delta_{CL})$.
If it is unable to compute a collision-free trajectory, a new waypoint $\mathbf{q}_{new}$ is appended next to the last column of $\mathbf{Q}$ to make the robot wait at the last collision-free pose until it finds a collision-free trajectory.
This approach computes a guaranteed collision-free trajectory, but leads to delay, which is fed to the Q-learning algorithm for the MDP task planner.
The higher the delay that the collision-free trajectory of a task has, the less likely the task planner selects the task again.

\section{Analysis}
\label{sec:analysis}

The overall performance is governed by three factors:
the predicted human motions described in Section~\ref{sec:prediction};
The optimization-based motion planner described in Section~\ref{sec:planning}; and the collision probability between the  robot and  predicted human motions for safe trajectory computation.
In this section, we analyze the performance and accuracy of each factor.

\subsection{Human motion prediction with noisy input}
\label{subsec:noisy_input}
In most scenarios, our human motion prediction algorithm is dealing with the noisy data. As a result, it is important analyze the performance of our approach taking into account these limitations.
To analyze the robustness of our human motion prediction algorithm, we take into account input noise in our Gaussian Process model.

Equation \ref{eq:GPRlearning} is the Gaussian Process Regression for human motion prediction, where the input variable is $F_{prev}$ and the output variables are represented as$\xi_{next,c}$.
To follow the standard notation of Gaussian Process, we use the symbol $\mathbf{x}$ as a $\mathit{D}$-dimensional input vector instead of $F_{prev}$, $y$ as an output variable instead of $\xi_{next,c}$, and $y = f(\mathbf{x}) + \epsilon_y$ instead of $p(\xi_{next,c} | F_{prev}) ~ \mathcal{GP}(m_c, K_c)$.
We add an input noise term $\mathbf{\epsilon}_x$ to the standard GP model,
\begin{align*}
y = \tilde{y} + \epsilon_y, & \, \epsilon_y \sim \mathcal{N}(0, \sigma_y^2),\\
\mathbf{x} = \mathbf{\tilde{x}} + \mathbf{\epsilon_x}, & \, \mathbf{\epsilon}_x \sim \mathcal{N}(\mathbf{0}, \Sigma_x),
\end{align*}
where we assume that the input noise is Gaussian and the $D$-dimensional input vector is independently noised, i.e. $\Sigma_x$ is diagonal.
With the input noise term, the output becomes
\begin{align*}
y = f(\tilde{\mathbf{x}} + \mathbf{\epsilon}_x) + \epsilon_y,
\end{align*}
and the first term Taylor expansion on the function $f$ yields
\begin{align*}
y = f(\tilde{\mathbf{x}}) + \mathbf{\epsilon}_x^T \mathbf{\partial}_f(\mathbf{x}) + \epsilon_y,
\end{align*}
where $\mathbf{\partial}_f(\mathbf{x})$ is the $D$-dimensional derivative of $f$ with respect to $\mathbf{x}$.
We have $N$ training data items, represented as $(\mathbf{x}_i, y_i)_{i=1}^N$.
$\mathbf{y}$ is a $N$-dimensional vector $\lbrace y_1, \, \cdots, y_N \rbrace^T$.
Following the derivation of the Gaussian Process(GP) with the additional error term presented in~\cite{mchutchon2011gaussian}, GP with noisy input becomes
\begin{align*}
m_c(\mathbf{x}_*) &= \mathbf{k}_{*} \left( K + \sigma_y^2 I + \mathrm{diag} \left( \Delta_f \Sigma_x \Delta_f^T \right) \right)^{-1} \mathbf{y}, \\
K_c(\mathbf{x}_*) &= k_{**} - \mathbf{k}_{*} \left( K + \sigma_y^2 I + \mathrm{diag} \left( \Delta_f \Sigma_x \Delta_f \right) \right)^{-1} \mathbf{k}_{*},
\end{align*}
where $\mathbf{x}_*$ is the input of mean function $m_c$ and variance function $K_c$, $k_{**}$ is the kernel function value on $\mathbf{x}_*$, i.e. $k_{**} = k(\mathbf{x}_*, \mathbf{x}_*)$, $K$ is a matrix of kernel function values on all pairs of input points, i.e. $K_{ij} = k(\mathbf{x}_i, \mathbf{x}_j)$, and $\Delta_f$ is a matrix whose $i$-th row is the derivative of $f$ at $\mathbf{x}_i$.
$\mathrm{diag}(\cdot)$ results in a diagonal matrix, leaving the diagonal elements and eliminating the non-diagonals to zero.
Compared to the standard GP, the additional term is the diagonal matrix $\mathrm{diag} \left( \Delta_f \Sigma_x \Delta_f \right)$, acting as the output noise term $\sigma_y I$.

In our human motion prediction algorithm, we use the human joint positions as input and output, so the input and the output have the same amount of noise.
In other words, $\Sigma_x = \sigma_x I$ and $\sigma_x = \sigma_y$.
Instead of differentiating the input and output noise, we use $\sigma = \sigma_x = \sigma_y$.
Also, the derivative $\mathbf{\partial}_f$ is proportional to the joint velocities, because $f$ is proportional to the joint position values.
Therefore, the elements of the diagonal matrix can be expressed as
\begin{align*}
\left( \mathrm{diag} \left( \Delta_f \Sigma_x \Delta_f \right) \right)_{ii} = \sigma_x || \mathbf{\partial}_f (\mathbf{x}_i) ||^2 = \sigma || \mathbf{v}_i ||^2,
\end{align*}
where $\mathbf{v}_i$ is the joint velocity.
Because the joint velocity is a variable for every training input data, instead of taking joint velocities, we set the joint velocity limits $\mathbf{v}$, satisfying $|| \mathbf{v}_i ||^2 \leq || \mathbf{v} ||^2$.
As a result, the Gaussian Process regression for human motion prediction can be given as:
\begin{align*}
m_c(\mathbf{x}_*) &= \mathbf{k}_{*} \left( K + \sigma^2 \left( 1 + || \mathbf{v} ||^2 \right) I\right)^{-1} \mathbf{y}, \\
K_c(\mathbf{x}_*) &= k_{**} - \mathbf{k}_{*} \left( K + \sigma_y^2 \left( 1 + || \mathbf{v} ||^2 \right) I \right)^{-1} \mathbf{k}_{*}.
\end{align*}
As long as we keep the joint velocity small, the Gaussian Process with input noise behaves robustly, similar to how it behaves without input noise.
The mean function is over-smoothed and the variance function becomes higher if the joint velocity limit is set high.

\subsection{Upper bound of collision probability}
 Using the predicted distribution and user-specified threshold $\delta_{CL}$, we can compute an upper bound using the following lemma.

\begin{lemma}
The collision probability is less than $(1 - \delta_{CL})$ if $\frac{4}{3} \pi (r_1+r_2)^2 f(\mathbf{x}_{max}) < 1 - \delta_{CL}$.
\end{lemma}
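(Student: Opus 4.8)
The plan is to derive the claim directly from the upper bound already established in~(\ref{eq:approx_sphere_collision}) together with transitivity of inequalities, so the argument itself is short. First I would recall the exact collision probability between a robot bounding sphere $B_{jk}(\mathbf{q}_i)$ and a human bounding sphere $C_l(t_i)$: it is the integral in~(\ref{eq:sphere_collision}), i.e. the mass placed by the Gaussian density $f$ of the sphere center $\mathbf{c}_l(t_i)$ (see~(\ref{eq:center_distribution})) on the ball of radius $r_1+r_2$ centered at $\mathbf{b}_{jk}(\mathbf{q}_i)$. Bounding the integrand pointwise by its supremum over that ball, which is $f(\mathbf{x}_{max})$ by construction of $\mathbf{x}_{max}$, and bounding the integration domain by the volume $\frac{4}{3}\pi(r_1+r_2)^2$ appearing in~(\ref{eq:approx_sphere_collision}), gives that inequality. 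Hence, under the hypothesis $\frac{4}{3}\pi(r_1+r_2)^2 f(\mathbf{x}_{max}) < 1-\delta_{CL}$, chaining the two bounds immediately yields that this pairwise collision probability is strictly below $1-\delta_{CL}$.

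Next I would lift the bound from a single pair of spheres to the collision probability that actually enters the cost function~(\ref{eq:collision_probability}). That quantity is approximated in~(\ref{eq:max_collision_probability}) by a maximum over index triples $(j,k,l)$, and each such term is a convex combination over the predicted action classes $c^h\in A^h$ with weights $p(c^h\mid F,\mathbf{p})$ summing to one. Since each sphere-pair/action-class term is bounded by its own $\frac{4}{3}\pi(r_1+r_2)^2 f(\mathbf{x}_{max})$, a convex combination of numbers each below a common threshold stays strictly below that threshold, and the maximum of finitely many such numbers does too; so the bound propagates, and it suffices that the hypothesis hold for the $\mathbf{x}_{max}$ realizing the largest of these values. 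This is precisely the test the planner uses to accept a waypoint at confidence level $\delta_{CL}$.

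I do not expect a genuine mathematical obstacle here; the work is bookkeeping. The only point that needs care is stating exactly which "collision probability" the lemma quantifies over — the single sphere-pair event, or the aggregate event in~(\ref{eq:collision_probability}) — and making explicit that passing to the maximum over spheres and to the convex combination over predicted actions each preserves a strict upper bound. Once that is pinned down, the lemma follows from~(\ref{eq:approx_sphere_collision}) in essentially one line.
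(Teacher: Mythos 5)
Your proposal is correct and follows essentially the same route as the paper: the paper's proof is exactly the chained inequality $p\left( B_{jk}(\mathbf{q}_i) \cap C_{l}(t_i) \neq \emptyset \right) \leq \frac{4}{3}\pi(r_1+r_2)^2 f(\mathbf{x}_{max}) < 1-\delta_{CL}$, i.e.\ the bound from~(\ref{eq:sphere_collision}) and~(\ref{eq:approx_sphere_collision}) plus transitivity. Your additional bookkeeping (propagating the strict bound through the maximum over sphere pairs in~(\ref{eq:max_collision_probability}) and the convex combination over action classes) is sound and merely makes explicit what the paper leaves implicit about which collision probability is being bounded.
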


\begin{proof}
This bounds follows Equation (\ref{eq:sphere_collision}) and (\ref{eq:approx_sphere_collision}).
\begin{align*}
    p \left( B_{jk}(\mathbf{q_i}) \cap C_{l}(t_i) \neq \emptyset \right) & \leq \frac{4}{3} \pi (r_1 + r_2)^2 f(\mathbf{x}_{max}) \\
     & < 1 - \delta_{CL} .
\end{align*}
\end{proof}
We use this bound in our optimization algorithm for collision avoidance.

\subsection{Safe trajectory optimization}
Our goal is to compute a robot trajectory that will either not collide with  the human or reduces the probability of collision below a certain threshold. Sometimes, there is no feasible collision-free trajectory that the robot cannot avoid  with the human.
However, if there is any trajectory where the collision probability  is less than a threshold, we seek to compute such a trajectory.
Our optimization-based planner also generates multiple initial trajectories and finds the best solution in parallel. In this manner, it expands the search space and reduces the probability of the robot being stuck in a local minima configuration. 
This can be expressed as the following theorem:
\begin{theorem}
An optimization-based planner with $n$ parallel threads will compute the a global solution trajectory with collision probability less than $(1 - \delta_{CL})$, with the probability $1-(1-\frac{|A(\delta_{CL})|}{|S|})^n$, if it exists, where $S$ is the entire search space. $A(\delta_{CL})$ corresponds to the neighborhood around the optimal solutions, with collision probability being less than $(1 - \delta_{CL})$, where the local optimization converges to one of the global optima. $|\cdot|$ is the measure of the search or configuration space.
\label{theorem:threads}
\end{theorem}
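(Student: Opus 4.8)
The plan is to view the multi-threaded planner as $n$ independent random restarts of a deterministic local optimizer and reduce the statement to an elementary complement-of-failures computation. First I would fix the sampling model dictated by the scheme in Section~\ref{sec:planning}: each thread draws its initial trajectory independently and uniformly at random from the search space $S$, so the $n$ initializations are i.i.d.\ with the uniform law on $S$. Next I would invoke the definition of $A(\delta_{CL})$ supplied in the statement --- it is exactly the set of initial trajectories from which the local optimization converges to a global optimum whose collision probability is below $1-\delta_{CL}$ --- to conclude that a given thread returns such a trajectory if and only if its initialization lands in $A(\delta_{CL})$.

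The probability bookkeeping is then immediate. Since the initialization of one thread is uniform on $S$, the chance it lands in $A(\delta_{CL})$, and hence succeeds, equals the ratio of measures $|A(\delta_{CL})|/|S|$, and the chance it fails equals $1 - |A(\delta_{CL})|/|S|$. By independence of the $n$ threads, the event that every thread fails has probability $(1 - |A(\delta_{CL})|/|S|)^n$, so the event that at least one thread returns a global solution with collision probability below $1-\delta_{CL}$ has probability $1 - (1 - |A(\delta_{CL})|/|S|)^n$, which is the claimed expression. The hypothesis ``if it exists'' is used only to guarantee that $A(\delta_{CL})$ is non-empty; together with the standard assumption that the cost landscape is regular enough for the acceptable global optima to have basins of positive measure, this makes the bound strictly positive and monotonically increasing to $1$ as $n$ grows.

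The arithmetic is trivial, so the real work is in justifying the two modeling assumptions. The first is that the restarts are genuinely i.i.d.\ uniform on $S$; if instead the sampler only has a density bounded below by a positive constant, the same argument still goes through with $|A(\delta_{CL})|/|S|$ replaced by the probability mass the sampler assigns to $A(\delta_{CL})$. The second, and the one I expect to be the main obstacle, is that $A(\delta_{CL})$ as defined coincides with the union of the basins of attraction of the acceptable global optima: this requires the local optimizer to be deterministic (so that each initialization has a well-defined limit point) and those basins to be measurable with finite, well-defined measure, so that the ratio $|A(\delta_{CL})|/|S|$ is meaningful in the first place. Once these are granted, the theorem follows in one line.
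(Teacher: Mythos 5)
Your argument is essentially the same as the paper's (very brief) justification: the paper likewise interprets $\frac{|A(\delta_{CL})|}{|S|}$ as the probability that a single random initial trajectory lands in the basin of an acceptable global optimum, and the $1-(1-\frac{|A(\delta_{CL})|}{|S|})^n$ expression as the complement of all $n$ independent threads failing. Your write-up is correct and in fact makes explicit the i.i.d.-uniform sampling and basin-of-attraction assumptions that the paper leaves implicit.
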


We give a brief overview of the proof. In this case, $\frac{|A(\delta_{CL})|}{|S|}$ measures the probability that a random sample lies in the neighborhood of the global optima. 
In general, $|A(\delta_{CL})|$ will be smaller as the environment becomes more complex and has more local minima.  
Overall, this theorem provides a lower bound on
the probability that our intention-aware planner with $n$ threads.
In the limit that $n$ increases, the planner will compute the optimal solution if it exist. This can be stated as:

\begin{corollary}[Probabilistic Completeness]
Our intention-aware motion planning algorithm with $n$ trajectories is probabilistic complete, as $n$ increases.
\end{corollary}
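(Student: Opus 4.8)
The plan is to obtain the corollary as a direct limiting consequence of Theorem~\ref{theorem:threads}. First I would fix a confidence level $\delta_{CL}$ and assume that a trajectory with collision probability below $(1-\delta_{CL})$ actually exists --- this is the hypothesis under which probabilistic completeness is even meaningful. By Theorem~\ref{theorem:threads}, running the optimization-based planner with $n$ independently seeded parallel threads returns such a trajectory with probability at least
\begin{equation*}
P_n = 1 - \left(1 - \frac{|A(\delta_{CL})|}{|S|}\right)^{n}.
\end{equation*}
The entire argument then reduces to showing that $\rho := |A(\delta_{CL})|/|S|$ is strictly positive and independent of $n$, so that $(1-\rho)^n \to 0$ and hence $P_n \to 1$, which is exactly the definition of probabilistic completeness as $n$ increases.

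The key step is therefore to argue $|A(\delta_{CL})| > 0$. Here I would invoke the structure established in Sections~\ref{sec:planning} and~\ref{subsec:noisy_input}: the collision cost in~(\ref{eq:approx_sphere_collision}) is assembled from continuous (indeed smooth) Gaussian densities together with the smoothness term, so the planner's objective is continuous on the bounded space-time search space $S$, and the region where the collision probability is below $(1-\delta_{CL})$ is an open sublevel set. Since, by assumption, a feasible trajectory exists and lies in the interior of this open set, a full-dimensional neighborhood of at least one global optimum is contained in the basin from which the local optimizer converges to a global solution; such a neighborhood has positive Lebesgue measure, i.e. $|A(\delta_{CL})| > 0$. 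Because $S$ is a bounded subset of the (space-time) configuration space, $|S|$ is finite and positive, so $\rho \in (0,1]$.

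With $\rho \in (0,1]$ fixed, $\lim_{n \to \infty} (1-\rho)^n = 0$, hence $\lim_{n \to \infty} P_n = 1$. Equivalently, for any target success probability $1-\eta$, taking $n \ge \log \eta / \log(1-\rho)$ threads suffices, which is precisely the asymptotic guarantee claimed by the corollary. I would close by stating explicitly the two assumptions the argument quietly relies on: (i) the $n$ initial trajectories are drawn independently (or at least so that the failure probabilities multiply as in Theorem~\ref{theorem:threads}); and (ii) the global optima are non-degenerate enough that their basins of attraction carry nonzero measure. The main obstacle is item (ii): guaranteeing $|A(\delta_{CL})| > 0$ rather than merely $|A(\delta_{CL})| \ge 0$ in pathological cost landscapes --- for instance when the only feasible trajectory is an isolated point of $S$ --- which is exactly why the corollary is phrased asymptotically and inherits the same genericity caveat already attached to Theorem~\ref{theorem:threads}.
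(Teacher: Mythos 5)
Your proposal is correct and follows essentially the same route as the paper, whose entire proof is just the observation that $\lim_{n\to\infty} 1-\bigl(1-\tfrac{|A(\delta_{CL})|}{|S|}\bigr)^n = 1$, i.e.\ taking the limit of the bound from Theorem~\ref{theorem:threads}. Your extra care in arguing $|A(\delta_{CL})|>0$ and flagging the independence assumption only makes explicit what the paper leaves implicit.
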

\begin{proof}
$$\lim_{n\to\infty} 1 - \left( 1-\frac{|A(\delta_{CL})|}{|S|} \right)^n = 1 .$$
\end{proof}
\section{Implementation and performance}
\label{sec:results}

We highlight the performance of our algorithm in a situation where the robot is performing a collaborative task with a human and computing safe trajectories.
We use a 7-DOF KUKA-IIWA robot arm.
The human motion is captured by a Kinect sensor operating with a 15Hz frame rate, and only upper body joints are tracked for collision checking.
We use the ROS software~\cite{quigley2009ros} for robot control and sensor data communication.
The motion planner has a 0.5s re-planning timestep, 
The number of pseudo-inputs $M$ of SPGPs is set to $100$ so that the prediction computation is performed online.

\subsection{Performance of human motion prediction}

\begin{figure*}[ht]
  \centering
  \begin{subfigure}[]{0.33\linewidth}
    \centering
    \includegraphics[width=\textwidth]{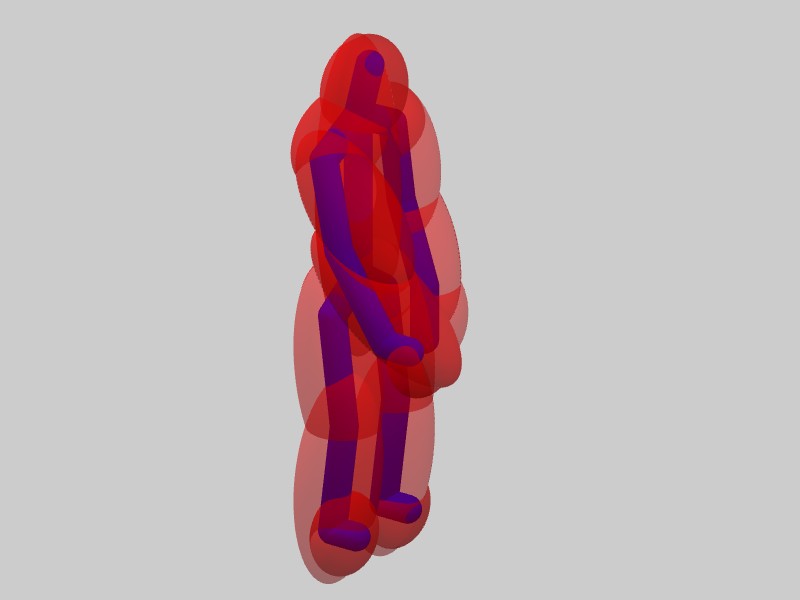}
    \caption{}
  \end{subfigure}
  \begin{subfigure}[]{0.33\linewidth}
    \centering
    \includegraphics[width=\textwidth]{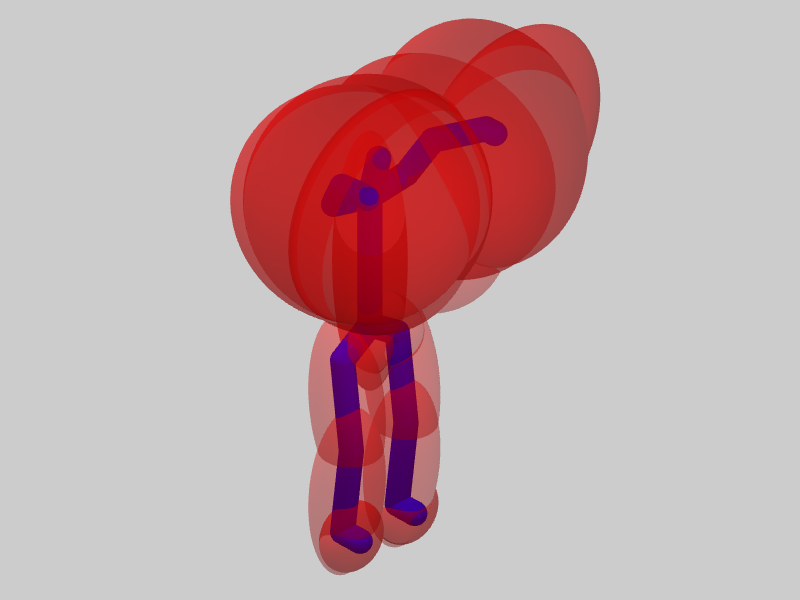}
    \caption{}
  \end{subfigure}
  \caption{
           {\bf Human motion prediction results:} The result of human motion prediction is represented by Gaussian distributions for each skeleton joint. The ellipsoid boundaries within which the integral of Gaussian distribution probability is 95\% are drawn in red, which the human skeleton is shown in blue.  Bounding ellipsoids have transparency values that are proportional to the action classifier probability.
           {
           (a) Undistinguished human action class: This occurs when the classifier fails to distinguish the human action class, and thereby generating nearly uniform probability distribution among the action classes.
           (b) Prediction results when untrained human motion is given: These cases result in larger boundary spheres around the human skeleton. This is due to the reason that in  the Gaussian Process, the output has a uniform mean and a high variance when the input point is outside the range of the training input data.
           }} \label{fig:image_prediction}
\end{figure*}

\begin{figure*}[ht]
  \centering
  \begin{subfigure}[]{0.32\linewidth}
    \centering
    \includegraphics[width=\textwidth]{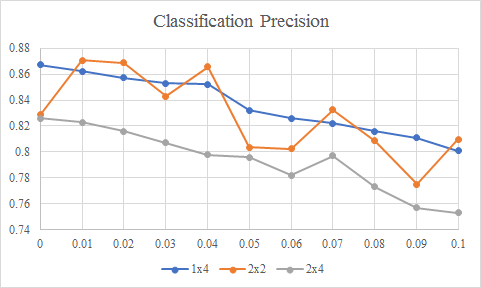}
    \caption{}
  \end{subfigure}
  \begin{subfigure}[]{0.32\linewidth}
    \centering
    \includegraphics[width=\textwidth]{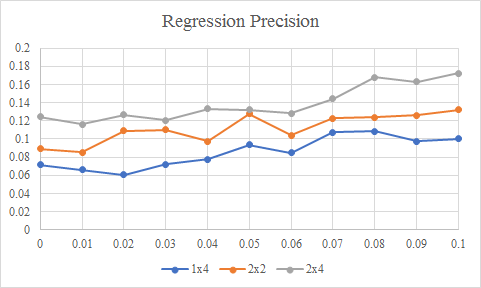}
    \caption{}
  \end{subfigure}
  \begin{subfigure}[]{0.32\linewidth}
    \centering
    \includegraphics[width=\textwidth]{figs/prediction_result1.png}
    \caption{}
  \end{subfigure}
  \caption{
           {\bf Performance of human motion prediction:} The precision/accuracy of classification and regression algorithms are shown in the graphs, with varying input noise.
           {
           (a) Classification precision versus input noise: We only take the input points where the probability of an action classification is dominant, i.e. probability $> 50\%$.
           (b) Regression precision versus input noise: It is measured by the integral of distance between the predicted human motion and the ground-truth human motion trajectory.
           (c) Regression precision versus input noise: It is the integral of the volume of Gaussian distribution ellipsoids.
           }} \label{fig:result_prediction}
\end{figure*}

We have tested our human motion prediction model on labeled motion datasets that correspond to a human reaching action.
Our human motion prediction model allows human joint position errors, as discussed in the prior Section.
In order to demonstrate the robustness of our model against input noise errors, we measure the classification accuracy and regression accuracy, varying the sensor error parameter and the maximum human joint velocity limits.

In the human reaching action motion datasets, the human is initially in a static pose in front of a table.
Then, a left or right arm moves and reaches towards one of the target locations on the table, and then returns to the initial pose.
The dataset contains $8$ different target positions on the table, and $30$ reaching motions for each target position.
Because the result of our human motion prediction model depends on human joint velocities, we synthesize fast and slow motions by changing the speed of captured motions.

We measure the correctness of human motion classification and human motion regression in the following ways:
\begin{itemize}
\item Motion classification precision: Since human motion is continuous and the transition between motions can be hard to measure, we only count the motion frames where the multi-class classifier yields the highest probability that is greater than $50\%$.

\item Motion regression precision: At a certain time, human motion trajectory for the next $T$ seconds has been predicted. We compute the regression precision as the integral of distance between the predicted and ground-truth motions over the $T$-second window as:
\begin{align*}
\int_{0}^{T} \sum_{i : joint} || \mathbf{p}_{pred,i}(t) - \mathbf{p}_{true,i}(t) ||^2 \, dt ,
\end{align*}
where $\mathbf{p}_{pred}(t)$ is the collection of resulting mean values of the Gaussian Process with noisy input for joint $i$. 

\item Motion regression accuracy: Similar to the precision, the accuracy is the integral of the volume of ellipsoids generated by the Gaussian distributions:
\begin{align*}
\int_{0}^{T} \sum_{i : joint} \det \left( \mathbf{K}_{i,pred}(t) \right) \, dt ,
\end{align*}
where $\mathbf{K}_{i,pred}(t)$ is the collection of resulting variance of the Gaussian Process with noisy input for joint $i$.
In this case, a lower value results in a better prediction result.
\end{itemize}

Figure~\ref{fig:image_prediction} shows the human motion prediction results.
The Gaussian Process regression algorithm corresponds to using Gaussian distribution ellipsoids around the predicted mean values of the joint positions.
In (a), when the human is in the idle position, the motion classifier results in near uniform distribution among the action classes, and the motion regression algorithm generates human motion trajectories that progress slightly towards the target positions.
In (b), when the human arm is moving close to the target position, the motion classifier predicts the motion class with a dominant probability, and the motion regression algorithm infers a correct future motion trajectory, that is more accurate than (a).
In (c), when an untrained human motion is given, the motion classifier and the motion regression algorithm result in uniform values and high variances, respectively, generating a conservative collision bound around the human.
This is a normal behavior in terms of classification and regression, because the given input point is outside the range of training data input points.

Figure~\ref{fig:result_prediction} shows the classification precision, regression precision and accuracy with varying input noises.
When the input noise is high, the accuracy of human motion classification and regression can go down.

\subsection{Robot motion planning with human motion prediction}

In the simulated benchmark scenario, the human is sitting in front of a desk. In this case, the robot arm helps the human by delivering objects from one position that is far away from the human to target position closer to the human. The human waits till the robot delivers the object. As different tasks are performed in terms of picking the objects and their delivery to the goal position, the temporal coherence is used to predict the actions.
The action set for a human is $A^h = \lbrace \mathit{Take_0}, \mathit{Take_1}, \cdots \rbrace$, where $\mathit{Take_i}$ represents an action of taking object $i$ from its current position to the new position.
The action set for the robot arm is defined as $A^r = \lbrace \mathit{Fetch_0}, \mathit{Fetch_1}, \cdots \rbrace$.

We quantitatively measure the following values:
\begin{itemize}
\item Modified Hausdorff Distance (MHD)~\cite{dubuisson1994modified}: The distance between the ground-truth human trajectory and the predicted mean trajectory is used. In our experiments, MHD is measured for an actively moving hand joint over 1 second. This evaluates the quality of the anticipated trajectory of the human motion.

\item Smoothness: We also measured the smoothness of the robot's trajectory with and without human motion prediction results. The smoothness is computed as
\begin{align}
\frac{1}{T} \int_{0}^{T} \sum_{i=1}^{n} \ddot{\mathbf{q}}_{i}(t)^{2} \, dt , \label{eq:smoothness}
\end{align}
where the two dots indicate acceleration of joint angles.

\item Jerkiness: It is defined as
\begin{align}
\max\limits_{0 \leq t \leq T} \sum_{i=1}^{n} \ddot{\mathbf{q}}_{i}(t)^{2} . \label{eq:jerkiness}
\end{align}

\item Distance: The closest distance between the robot and the human during task collaboration.

\item Efficiency: This is the ratio of the task completion time when the robot and the human collaborate to complete all the subtasks, to the task completion time when the human performs all the tasks without any help from the robot. This is used to evaluate the capability of the resulting human-robot system.
\end{itemize}

To compare the performance of our I-Planner, we use the following algorithm:
\begin{itemize}
\item \textbf{ITOMP}. This model is the same as the realtime motion planner for dynamic environments, ITOMP~\cite{Park:2012:ICAPS}, without human motion prediction.
\item \textbf{I-Planner, no noisy input (I-Planner, no NI)}. This is our motion planning algorithm with  human motion prediction, but the motion prediction does not assume noisy input. The details of this algorithm are given in the preliminary paper~\cite{park2017intention}.
\item \textbf{I-Planner, noisy input (I-Planner, NI)}. This is the modified algorithm presented in the previous section that also takes into account noise in the human motion prediction data.
\end{itemize}

Table~\ref{table:performance} highlights the performance of our algorithm in three different variations of this scenario: \textit{arrangements of blocks}, \textit{task order} and \textit{confidence level}.
The numbers in the "Task Order" column indicates the identifiers of human actions.
Parentheses mean that the human actions in the parentheses can be performed in any order.
Arrows mean that the right actions can be performed only if the left actions are done.
For example, $(0, 1) \rightarrow (2, 3)$ means that the possible action orders are $0 \rightarrow 1 \rightarrow 2 \rightarrow 3$, $0 \rightarrow 1 \rightarrow 3 \rightarrow 2$, $1 \rightarrow 0 \rightarrow 2 \rightarrow 3$ and $1 \rightarrow 0 \rightarrow 3 \rightarrow 2$.
Table~\ref{table:performance_real} shows the performance of our algorithm with a real robot. Our algorithm has been implemented on a PC with 8-core i7-4790 CPU.
We used OpenMP to parallelize the computation of future human motion prediction and probabilistic collision checking.

\begin{table*}[ht]
\centering
\begin{tabular}{|c|c|c|c|c|}
\hline
\multirow{2}{*}{Scenarios} & \multirow{2}{*}{Arrangement} & \multirow{2}{*}{\begin{tabular}[x]{@{}c@{}}Task Order\end{tabular}}  & \multirow{2}{*}{\begin{tabular}[x]{@{}c@{}}Confidence \\ Level\end{tabular}} &  \multirow{2}{*}{\begin{tabular}[x]{@{}c@{}}Average \\ Prediction Time\end{tabular}}\\
& & & & \\ \hline
\multirow{3}{*}{\begin{tabular}[x]{@{}c@{}}Different\\Arrangements\end{tabular}} & $1 \times 4$ & $(0,1) \rightarrow (2,3)$ & 0.95 & 52.0 ms \\ \cline{2-5}
& $2 \times 2$ & $(1,5) \rightarrow (2,6)$ & 0.95 & 72.4 ms \\ \cline{2-5}
& $2 \times 4$ & $(0,4) \rightarrow (1,5) \rightarrow (2,6) \rightarrow (3,7)$ & 0.95 & 169 ms \\ \hline
\multirow{3}{*}{\begin{tabular}[x]{@{}c@{}}Temporal\\Coherence\end{tabular}} & $1 \times 4$ & $0 \rightarrow 1 \rightarrow 2 \rightarrow 3$ & 0.95 & 52.1 ms \\ \cline{2-5}
& $1 \times 4$ & Random & 0.95 & 105 ms \\ \cline{2-5}
& $1 \times 4$ & $(0,2) \rightarrow (1,3)$ & 0.95 & 51.7 ms \\ \hline
\multirow{3}{*}{\begin{tabular}[x]{@{}c@{}}Confidence\\Level\end{tabular}} & $1 \times 4$ & $0 \rightarrow (1,2) \rightarrow 3$ & 0.90 & 47.2 ms \\ \cline{2-5}
& $1 \times 4$ & $0 \rightarrow (1,2) \rightarrow 3$ & 0.95 & 50.7 ms \\ \cline{2-5}
& $1 \times 4$ & $0 \rightarrow (1,2) \rightarrow 3$ & 0.99 & 155 ms \\ \hline
\end{tabular}
\caption{Three different simulation scenarios: \textit{Different Arrangements}, \textit{Temporal Coherence}, and \textit{Confidence Level}. We take into account different arrangements of blocks as well as the confidence levels used for probabilistic collision checking. These confidence levels are used for motion prediction.}
\label{table:performance}
\end{table*}

\begin{table*}[ht]
\centering
\begin{tabular}{|c|c|c|c|c|c|c|c|}
\hline
Scenarios & Model & Prediction Time & MHD & Smoothness & Jerkiness & Distance & Efficiency \\ \hline
\multirow{3}{*}{\begin{tabular}[x]{@{}c@{}}Different\\Arrangements (1)\end{tabular}}
& ITOMP            & N/A     & N/A    & 2.96 & 5.23 & 3.2 cm & 1.2 \\ \cline{2-8}
& I-Planner, no NI & 52.0 ms & 6.7 cm & 1.08 & 1.52 & 6.7 cm & 1.6 \\ \cline{2-8}
& I-Planner, NI    & 58.5 ms & 7.2 cm & 0.92 & 1.03 & 8.1 cm & 1.7 \\ \hline
\multirow{3}{*}{\begin{tabular}[x]{@{}c@{}}Different\\Arrangements (2)\end{tabular}}
& ITOMP            & N/A     & N/A    & 5.78 & 7.19 &  2.3 cm & 1.1 \\ \cline{2-8}
& I-Planner, no NI & 72.4 ms & 6.2 cm & 1.04 & 1.60 &  8.2 cm & 1.6 \\ \cline{2-8}
& I-Planner, NI    & 70.8 ms & 7.0 cm & 0.84 & 1.32 & 10.7 cm & 1.6 \\ \hline
\multirow{3}{*}{\begin{tabular}[x]{@{}c@{}}Different\\Arrangements (3)\end{tabular}}
& ITOMP            & N/A    & N/A     & 4.82 & 6.82 & 1.6 cm & 1.2 \\ \cline{2-8}
& I-Planner, no NI & 169 ms & 10.4 cm & 1.15 & 1.30 & 6.2 cm & 1.6 \\ \cline{2-8}
& I-Planner, NI    & 150 ms & 12.6 cm & 1.02 & 1.20 & 8.9 cm & 1.6 \\ \hline
\multirow{3}{*}{\begin{tabular}[x]{@{}c@{}}Temporal\\Coherence (1)\end{tabular}}
& ITOMP            & N/A     & N/A    & 1.79 & 3.22 & 6.0 cm & 1.3 \\ \cline{2-8}
& I-Planner, no NI & 52.1 ms & 4.3 cm & 0.65 & 1.56 & 9.3 cm & 1.5 \\ \cline{2-8}
& I-Planner, NI    & 48.9 ms & 5.2 cm & 0.62 & 1.53 & 9.7 cm & 1.5 \\ \hline
\multirow{3}{*}{\begin{tabular}[x]{@{}c@{}}Temporal\\Coherence (2)\end{tabular}}
& ITOMP            & N/A    & N/A    & 5.49 & 7.30 & 2.0 cm & 1.0 \\ \cline{2-8}
& I-Planner, no NI & 105 ms & 8.2 cm & 1.21 & 1.28 & 8.8 cm & 1.6 \\ \cline{2-8}
& I-Planner, NI    & 110 ms & 9.9 cm & 1.08 & 1.10 & 9.9 cm & 1.6 \\ \hline
\multirow{3}{*}{\begin{tabular}[x]{@{}c@{}}Temporal\\Coherence (3)\end{tabular}}
& ITOMP            & N/A     & N/A    & 3.12 & 3.18 &  8.0 cm & 1.3 \\ \cline{2-8}
& I-Planner, no NI & 51.7 ms & 6.8 cm & 1.00 & 1.20 & 12.1 cm & 1.5 \\ \cline{2-8}
& I-Planner, NI    & 60.0 ms & 8.2 cm & 0.79 & 0.93 & 13.2 cm & 1.5 \\ \hline
\multirow{3}{*}{\begin{tabular}[x]{@{}c@{}}Confidence\\Level (1)\end{tabular}}
& ITOMP            & N/A     & N/A    & 2.90 & 3.40 &  7.2 cm & 1.2 \\ \cline{2-8}
& I-Planner, no NI & 47.2 ms & 7.9 cm & 1.17 & 1.58 &  9.5 cm & 1.6 \\ \cline{2-8}
& I-Planner, NI    & 52.1 ms & 9.1 cm & 1.08 & 1.32 & 10.2 cm & 1.7 \\ \hline
\multirow{3}{*}{\begin{tabular}[x]{@{}c@{}}Confidence\\Level (2)\end{tabular}}
& ITOMP            & N/A     & N/A    & 3.12 & 3.80 & 10.3 cm & 1.2 \\ \cline{2-8}
& I-Planner, no NI & 50.7 ms & 7.9 cm & 1.28 & 1.71 & 13.3 cm & 1.6 \\ \cline{2-8}
& I-Planner, NI    & 48.2 ms & 9.1 cm & 1.20 & 1.66 & 14.1 cm & 1.8 \\ \hline
\multirow{3}{*}{\begin{tabular}[x]{@{}c@{}}Confidence\\Level (3)\end{tabular}}
& ITOMP            & N/A    & N/A    & 3.76 & 4.33 & 13.0 cm & 1.2 \\ \cline{2-8}
& I-Planner, no NI & 155 ms & 7.9 cm & 1.40 & 1.90 & 16.2 cm & 1.7 \\ \cline{2-8}
& I-Planner, NI    & 129 ms & 9.1 cm & 1.35 & 1.73 & 17.7 cm & 1.8 \\ \hline
\end{tabular}
\caption{Performance of the planner in robot motion simulation. The prediction results in a smoother trajectory and we observe up to 4X improvement in our smoothness metric defined in Equation (\ref{eq:smoothness}). The overall planner runs in real time.}
\label{table:performance_simulation}
\end{table*}

\begin{table*}[ht]
\centering
\begin{tabular}{|c|c|c|c|c|c|c|}
\hline
Scenarios & Model & Prediction Time & MHD & Smoothness & Jerkiness & Distance \\ \hline
\multirow{3}{*}{Waving Arms}
& ITOMP            & N/A     & N/a    & 4.88 & 6.23 &  2.1 cm \\ \cline{2-7}
& I-Planner, no NI & 20.9 ms & 5.0 cm & 0.91 & 1.33 &  9.3 cm \\ \cline{2-7}
& I-Planner, NI    & 23.5 ms & 6.1 cm & 0.83 & 1.25 & 10.5 cm \\ \hline
\multirow{3}{*}{Moving Cans}
& ITOMP            & N/A     & N/A    & 5.13 & 7.83 &  3.9 cm \\ \cline{2-7}
& I-Planner, no NI & 51.7 ms & 7.3 cm & 1.04 & 1.82 &  8.7 cm \\ \cline{2-7}
& I-Planner, NI    & 50.0 ms & 8.8 cm & 0.93 & 1.32 & 13.5 cm \\ \hline
\end{tabular}
\caption{Performance of the planner with a real robot running on the 7-DOF Fetch robot next to dynamic human obstacles. The online motion planner computes safe trajectories for challenging benchmarks like "moving cans." We observe almost 5X improvement in the smoothness of the trajectory due to our prediction algorithm.}
\label{table:performance_real}
\end{table*}

\subsection{Robot motion responses to human motion speed}

\begin{figure*}[ht]
  \centering
  \begin{subfigure}[t]{0.24\linewidth}
    \centering
    \includegraphics[width=\textwidth]{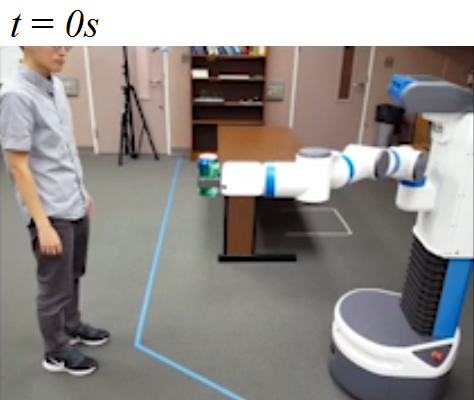}
  \end{subfigure}
  \begin{subfigure}[t]{0.24\linewidth}
    \centering
    \includegraphics[width=\textwidth]{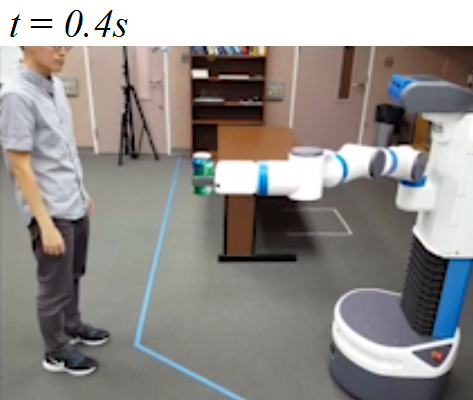}
  \end{subfigure}
  \begin{subfigure}[t]{0.24\linewidth}
    \centering
    \includegraphics[width=\textwidth]{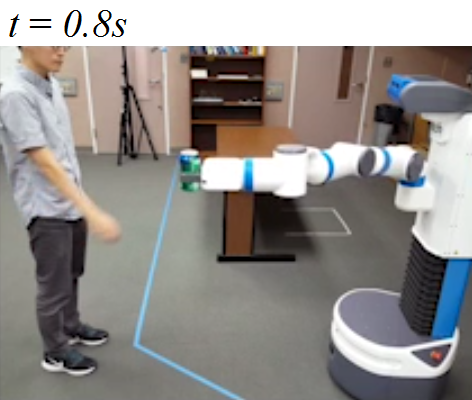}
    \caption{}
  \end{subfigure}
  \begin{subfigure}[t]{0.24\linewidth}
    \centering
    \includegraphics[width=\textwidth]{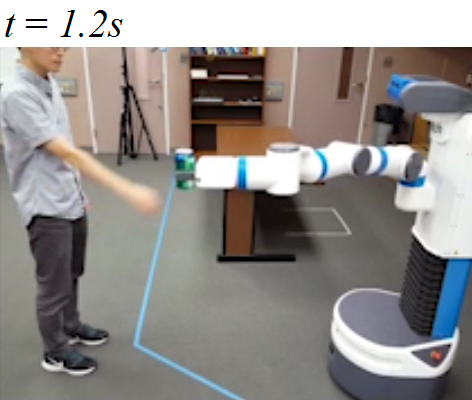}
  \end{subfigure}
  \\
  \begin{subfigure}[t]{0.24\linewidth}
    \centering
    \includegraphics[width=\textwidth]{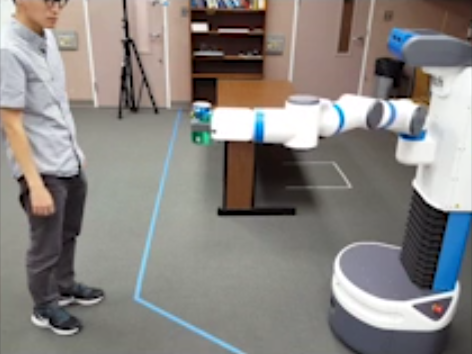}
  \end{subfigure}
  \begin{subfigure}[t]{0.24\linewidth}
    \centering
    \includegraphics[width=\textwidth]{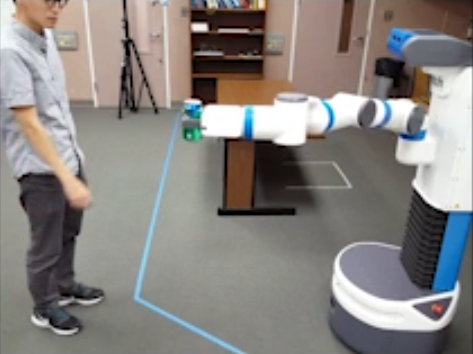}
  \end{subfigure}
  \begin{subfigure}[t]{0.24\linewidth}
    \centering
    \includegraphics[width=\textwidth]{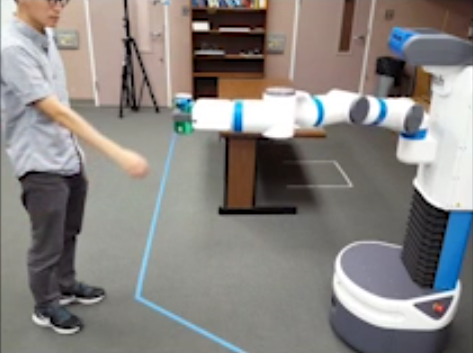}
    \caption{}
  \end{subfigure}
  \begin{subfigure}[t]{0.24\linewidth}
    \centering
    \includegraphics[width=\textwidth]{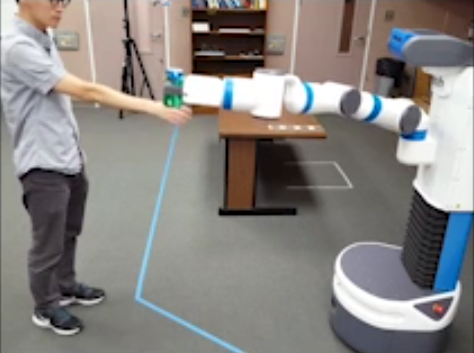}
  \end{subfigure}
  \\
  \begin{subfigure}[t]{0.24\linewidth}
    \centering
    \includegraphics[width=\textwidth]{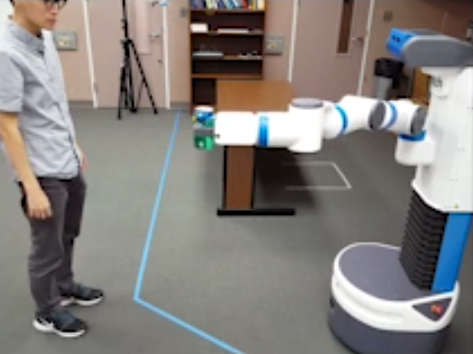}
  \end{subfigure}
  \begin{subfigure}[t]{0.24\linewidth}
    \centering
    \includegraphics[width=\textwidth]{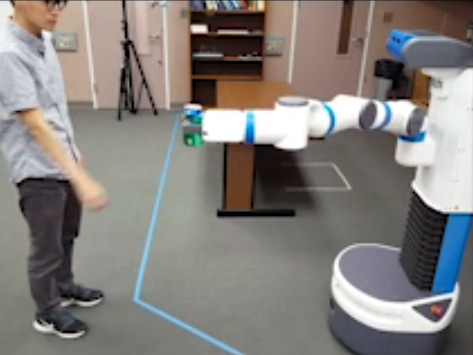}
  \end{subfigure}
  \begin{subfigure}[t]{0.24\linewidth}
    \centering
    \includegraphics[width=\textwidth]{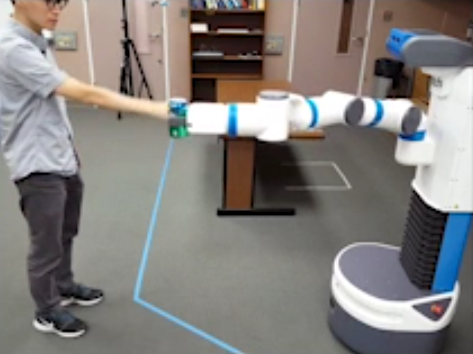}
    \caption{}
  \end{subfigure}
  \begin{subfigure}[t]{0.24\linewidth}
    \centering
    \includegraphics[width=\textwidth]{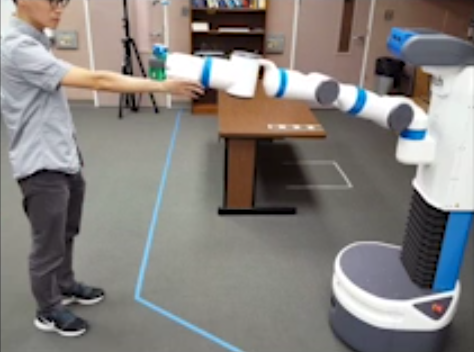}
  \end{subfigure}
  \caption{
           {\bf Responses to three different human arm movement speeds: }
           {While the robot arm is moving left to right, the human moves arm to block the robot's trajectory at different speeds.
           (a) The human arm moves slowly. The robot has enough time to predict the human arm motion, generating the smoothest and the least jerky robot trajectory.
           (b) As the human moves at a medium speed, the robot predicts the human's future motion, recognizes that it will block the robot's path, and therefore changes the trajectory upwards (at $t=0.8s$) to avoid  the obstacle and generate smooth trajectory.
           (c) When the human arm moves faster, the robot trajectory abruptly changes to move upwards (at $t=0.8s$), generating a less smooth trajectory, while avoiding the human.
           }} \label{fig:speed_response}
\end{figure*}

Figure~\ref{fig:speed_response} shows the robot's responses to three different speeds of human movements in a human-robot scenario. In this scenario, the human arm serves as a blocker or obstacle to the robot's motion from left to right.
The human moves at slow speed in (a), medium speed in (b), and fast speed in (c).
Because the human motion prediction and the robot motion planning process operate in parallel at the same time, the motion planner needs to take into account the current results of the motion predictor. If the human moves slowly, the robot motion planner is given the future predicted human motion, and therefore the planner has enough planning time to adjust the robot's trajectory. This results in smooth and collision-free trajectories of the robot.
However, if the robot moves fast, the prediction is not very accurate due to the limited processing time. At the next planning timestep, the robot's trajectory may abruptly change to avoid the current human pose, generating a jerky or non-smooth motion. This highlights how the performance of the prediction algorithm affects the smoothness of robot's trajectory.

\subsection{Benefits of our prediction algorithm}

\begin{figure}[ht]
  \centering
  \begin{subfigure}[]{0.3\linewidth}
    \centering
    \includegraphics[width=\textwidth]{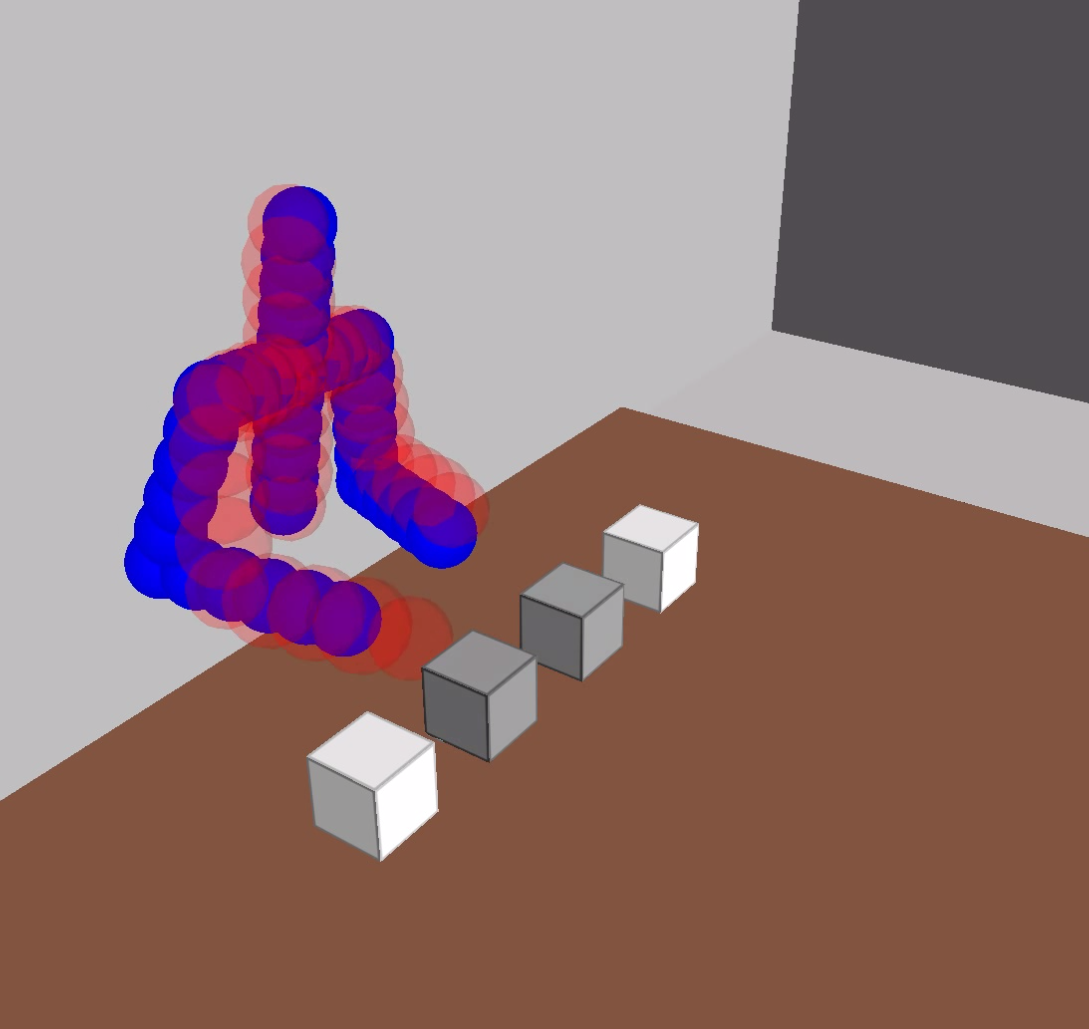}
    \caption{}
  \end{subfigure}
  \begin{subfigure}[]{0.3\linewidth}
    \centering
    \includegraphics[width=\textwidth]{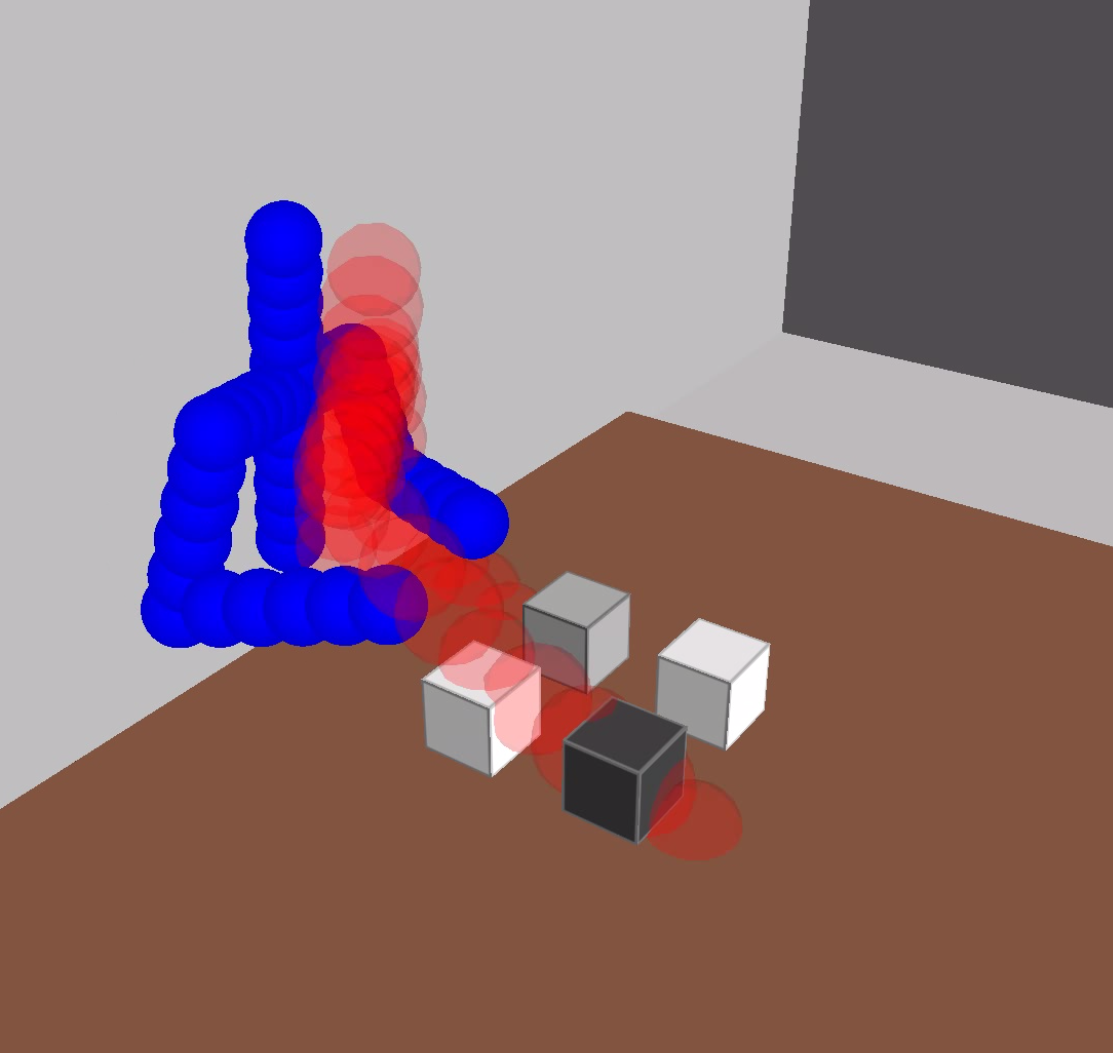}
    \caption{}
  \end{subfigure}
  \begin{subfigure}[]{0.3\linewidth}
    \centering
    \includegraphics[width=\textwidth]{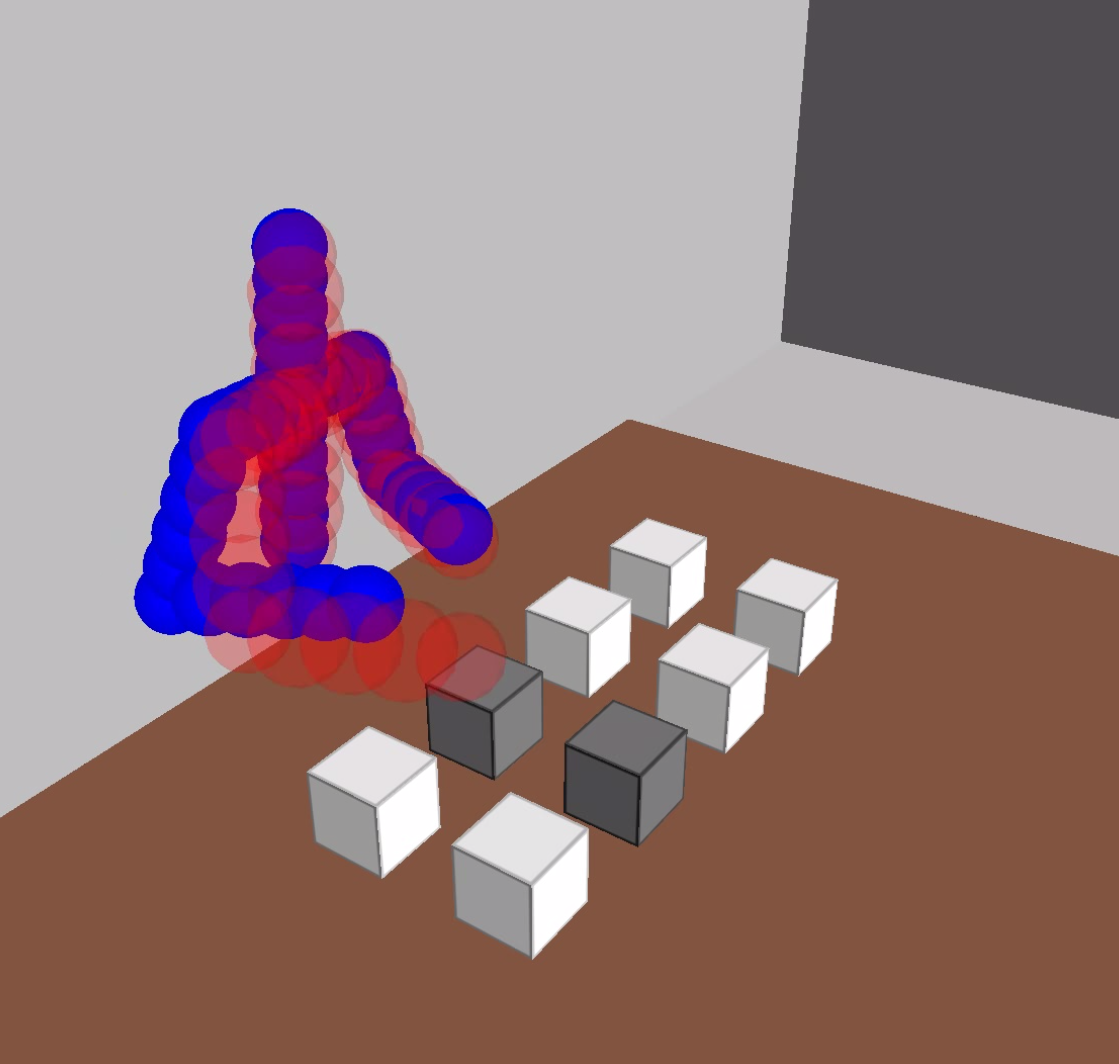}
    \caption{}
  \end{subfigure}
  \caption{{\bf Different block arrangements:} 
           {Different arrangements in terms of the positions of the blocks, results in different human motions and actions. Our planner computes their intent for safe trajectory planning. The different arrangements are: 
           (a) $1 \times 4$.
           (b) $2 \times 2$.
           (c) $2 \times 4$.
           }} \label{fig:result_arrangement}
\end{figure}

In the \textit{Different Arrangements} scenarios, the position and layout of the blocks changes.
Fig.~\ref{fig:result_arrangement} shows three different arrangements of the blocks: $1 \times 4$, $2 \times 2$ and $2 \times 4$.
In the two cases $2 \times 2$ and $2 \times 4$, where positions are arranged in two rows unlike the $1 \times 4$ scenario, the human arm blocks a movement from a front position to the back position. As a result, the robot needs to compute its trajectory accordingly.

Depending on the temporal coherence present in the human tasks, the human intention prediction may or may not improve the performance of our the task planner. It is shown in the \textit{Temporal Coherence} scenarios.
In the sequential order coherence, the human intention is predicted accurately with our approach with $100\%$ certainty.
In the random order, however, the human intention prediction step is not accurate until the human hand reaches the specific position.
The personal order varies for each human, and reduces the possibility of predicting the next human action.
When the right arm moves forward a little, $\mathit{Fetch}_0$ is predicted as the human intention with a high probability whereas $\mathit{Fetch}_1$ is predicted with low probability, even though position $1$ is closer than position $0$.

\begin{figure}[ht]
  \centering
  \begin{subfigure}[]{0.3\linewidth}
    \centering
    \includegraphics[width=\textwidth]{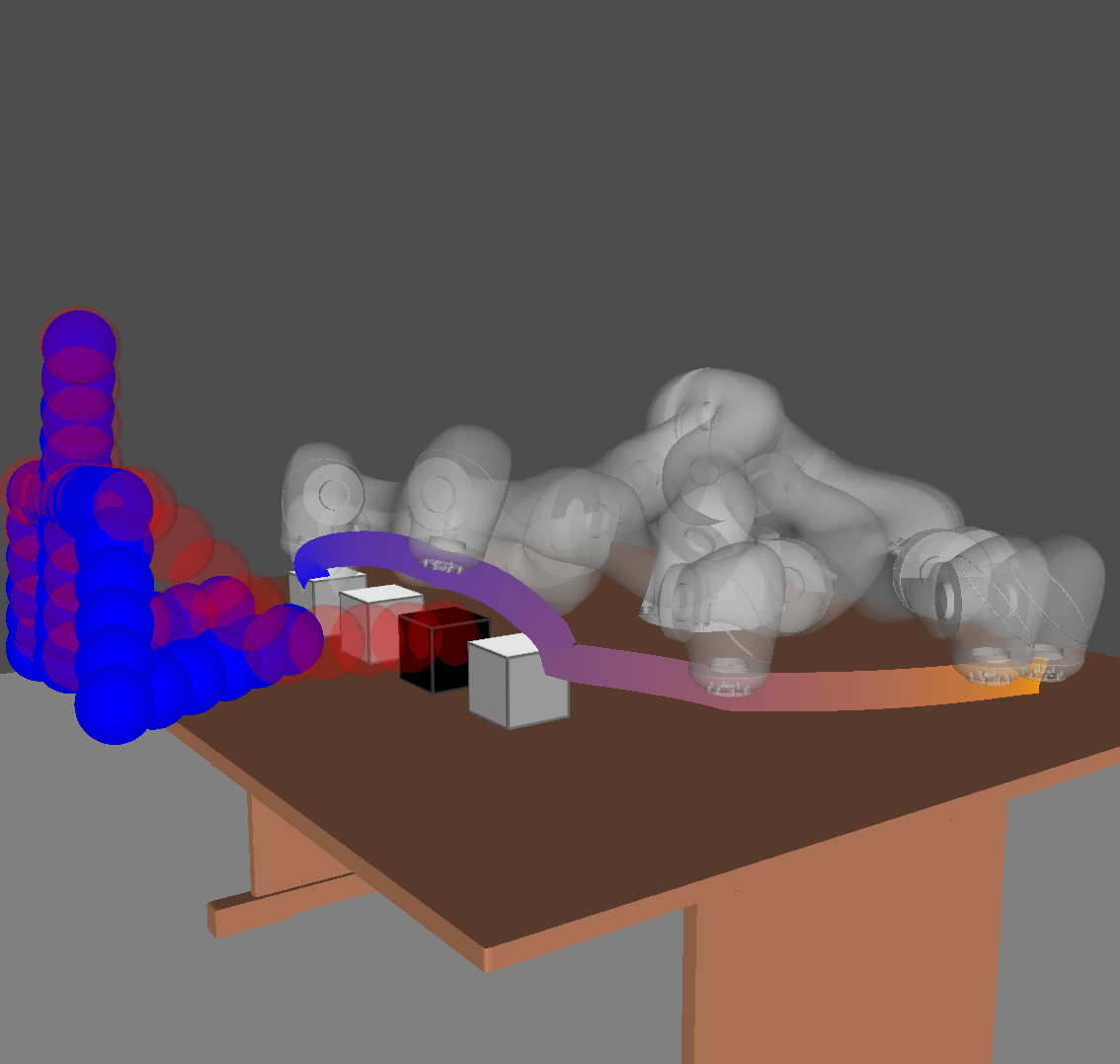}
    \caption{}
  \end{subfigure}
  \begin{subfigure}[]{0.3\linewidth}
    \centering
    \includegraphics[width=\textwidth]{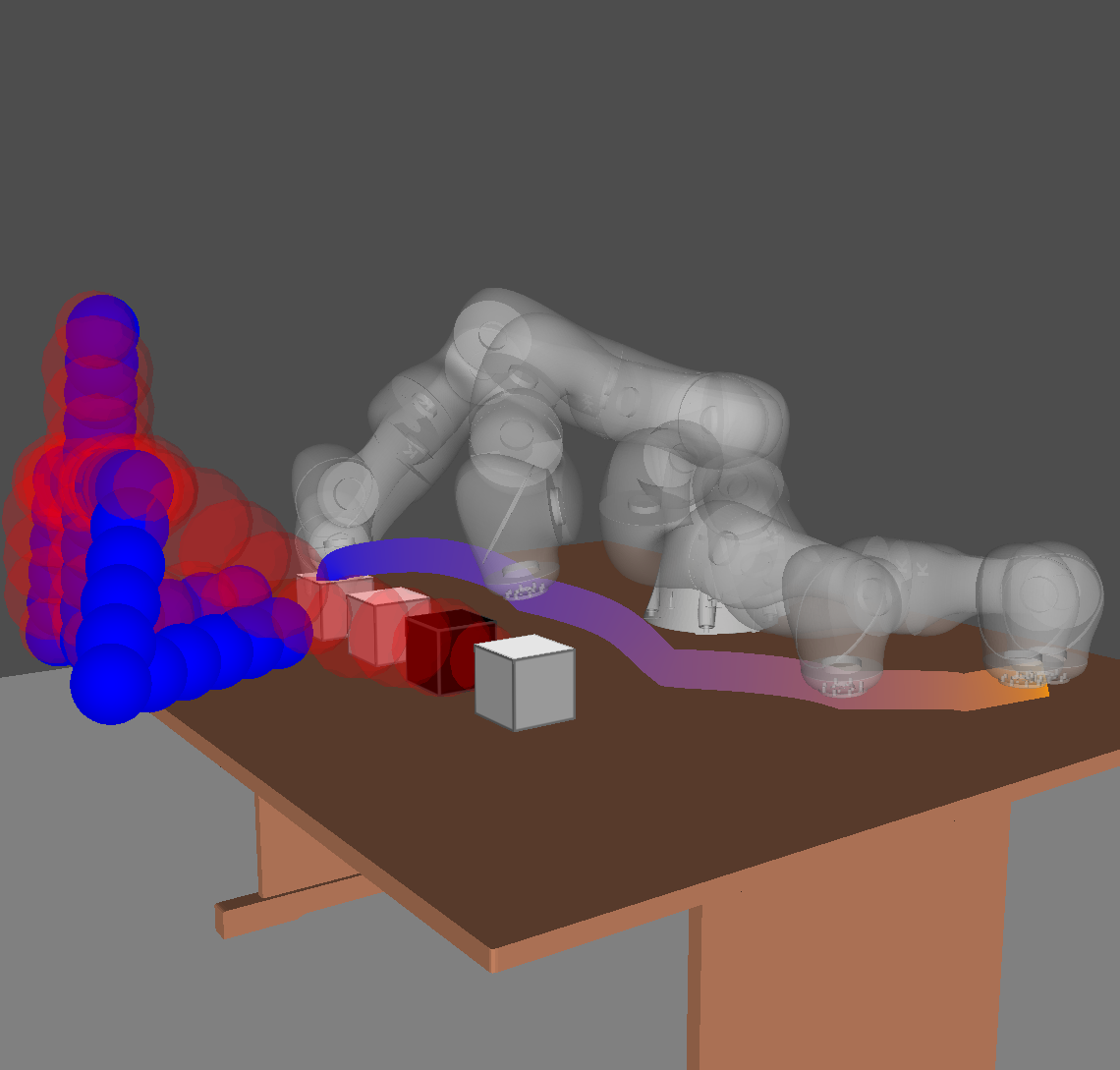}
    \caption{}
  \end{subfigure}
  \begin{subfigure}[]{0.3\linewidth}
    \centering
    \includegraphics[width=\textwidth]{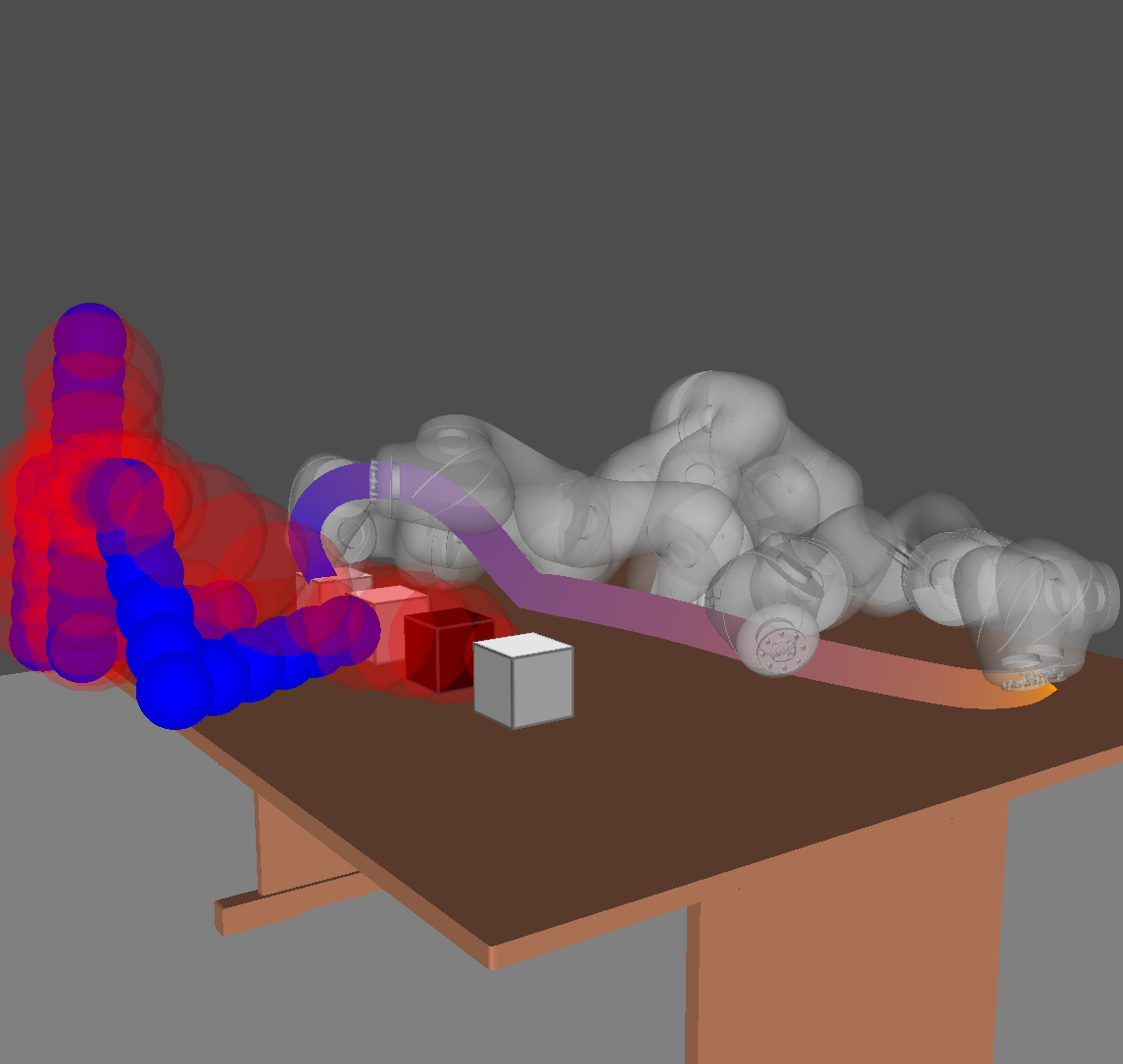}
    \caption{}
  \end{subfigure}
  \caption{{\bf Probabilistic collision checking with different confidence levels:} 
           {A collision probability less $(1 - \delta_CD)$ implies a safe trajectory. The current pose (i.e., blue spheres) and the predicted future pose (i.e. red spheres) are shown. The robot's trajectory avoids these collisions before the human performs its action. The higher the confidence level is, the longer the distance between the human arm and the robot trajectory.
           (a) $\delta_{CD} = 0.90$.
           (b) $\delta_{CD} = 0.95$.
           (c) $\delta_{CD} = 0.99$.
           }}
\end{figure}

In the \textit{Confidence Level} scenarios, we analyze the effect of confidence level $\delta_{CD}$ on the trajectory computed by the planner, the average task completion time, and the average motion planning time.
As the confidence level becomes higher, the robot may not take the smoothest and shortest path so as to compute a collision-free path that is consistent with the confidence level.

In all cases, we observe the prediction results in smoother trajectory, using the smoothness metric defined as Equation (\ref{eq:smoothness}). This is because the robot changes its path in advance before the human obstacle actually blocks the robot's shortest path if human motion prediction is used.

\subsection{Evaluation using 7-DOF Fetch robot}
We integrated our planner with 
the 7-DOF Fetch robot arm and evaluted in complex 3D workspaces.
The robot delivers four soda cans from start locations to target locations on a desk.
At the same time, the human sitting in front of the desk picks up and takes away the soda cans delivered to the target positions by the robot, which can cause collisions with the robot arm.
In order to evaluate the collision avoidance capability of our approach, the human intentionally starts moving his arm to a soda can at a target location, blocking the robot's initially planned trajectory, when the robot is delivering another can moving fast.
Our intention aware planner avoids collisions with the human arm and results in a smooth trajectory compared to motion planner results without human motion prediction.

Figure~\ref{fig:result_real_robot} shows two sequences of robot's trajectories. In the first row, the robot arm trajectory is generated an ITOMP~\cite{Park:2012:ICAPS} re-planning algorithm without human motion prediction. As the human and the robot arm move too fast to re-plan collision-free trajectory. As a result, the robot collides  (the second figure) or results in a jerky trajectory (the third figure). In the second row, our human motion prediction approach is incorporated as described in Section~\ref{sec:planning}. The robot re-plans the arm trajectory before the human actually blocks its way, resulting in collision-free path.
\section{Conclusions, limitations, and future work}
\label{sec:conclusions}

We present a novel intention-aware planning algorithm to compute safe robot trajectories in dynamic environments with humans performing different actions.
Our approach uses offline learning of human motions and can account for large noise in terms of depth cameras.
At runtime, our approach uses the learned human actions to predict and estimate the future motions. We use upper bounds on collision guarantees to compute safe trajectories.
We highlight the performance of our planning algorithm in complex benchmarks for human-robot cooperation in both simulated and real world scenarios with 7-DOF robots.
Compared to~\cite{park2017intention}, our improved human motion prediction model can better handle input noise and generate smoother robot trajectories.

Our approach has some limitations.
As the number of human action types increases, the number of states of MDP can increase significantly. In this case, it may be useful to use POMDP for robot motion planning under uncertainty~\cite{kurniawati2012global}.
Our probabilistic collision checking formulation is limited to environment uncertainty and does not take into account robot control errors. The performance of motion prediction algorithm depends on the variety and size of the learned data. Currently, we use supervised learning with labeled action types, but it would be useful to explore unsupervised learning based on appropriate action clustering algorithms.
Furthermore, our analysis of human motion prediction noise assumes a Gaussian process model and it would be useful to extend other noise models.
Moreover,, we would to measure the impact of robot actions on human motion, and thereby establish a two-way coupling between robot and human actions.


\section{Acknowledgement}
This research is supported in part by ARO grant W911NF-14-1-0437 and NSF grant 1305286.

\bibliographystyle{plainnat}
\bibliography{refs}

\begin{thebibliography}{47}
\providecommand{\natexlab}[1]{#1}
\providecommand{\url}[1]{\texttt{#1}}
\expandafter\ifx\csname urlstyle\endcsname\relax
  \providecommand{\doi}[1]{doi: #1}\else
  \providecommand{\doi}{doi: \begingroup \urlstyle{rm}\Url}\fi

\bibitem[Bai et~al.(2015)Bai, Cai, Ye, Hsu, and Lee]{bai2015intention}
Haoyu Bai, Shaojun Cai, Nan Ye, David Hsu, and Wee~Sun Lee.
\newblock Intention-aware online pomdp planning for autonomous driving in a
  crowd.
\newblock In \emph{Robotics and Automation (ICRA), 2015 IEEE International
  Conference on}, pages 454--460. IEEE, 2015.

\bibitem[Bandyopadhyay et~al.(2013)Bandyopadhyay, Jie, Hsu, Ang~Jr, Rus, and
  Frazzoli]{bandyopadhyay2013intention}
Tirthankar Bandyopadhyay, Chong~Zhuang Jie, David Hsu, Marcelo~H Ang~Jr,
  Daniela Rus, and Emilio Frazzoli.
\newblock Intention-aware pedestrian avoidance.
\newblock In \emph{Experimental Robotics}, pages 963--977. Springer, 2013.

\bibitem[Bera et~al.(2016)Bera, Kim, Randhavane, Pratapa, and
  Manocha]{beraglmp}
Aniket Bera, Sujeong Kim, Tanmay Randhavane, Srihari Pratapa, and Dinesh
  Manocha.
\newblock Glmp-realtime pedestrian path prediction using global and local
  movement patterns.
\newblock \emph{ICRA}, 2016.

\bibitem[Busoniu et~al.(2008)Busoniu, Babuska, and
  De~Schutter]{busoniu2008comprehensive}
Lucian Busoniu, Robert Babuska, and Bart De~Schutter.
\newblock A comprehensive survey of multiagent reinforcement learning.
\newblock \emph{Systems, Man, and Cybernetics, Part C: Applications and
  Reviews, IEEE Transactions on}, 38\penalty0 (2):\penalty0 156--172, 2008.

\bibitem[Choo et~al.(2014)Choo, Landau, DeVore, and
  Beling]{choo2014statistical}
Benjamin Choo, Michael Landau, Michael DeVore, and Peter~A Beling.
\newblock Statistical analysis-based error models for the microsoft kinecttm
  depth sensor.
\newblock \emph{Sensors}, 14\penalty0 (9):\penalty0 17430--17450, 2014.

\bibitem[Dragan and Srinivasa(2013)]{dragan2013policy}
Anca~D Dragan and Siddhartha~S Srinivasa.
\newblock A policy-blending formalism for shared control.
\newblock \emph{The International Journal of Robotics Research}, 32\penalty0
  (7):\penalty0 790--805, 2013.

\bibitem[Dragan et~al.(2015)Dragan, Bauman, Forlizzi, and
  Srinivasa]{dragan2015effects}
Anca~D Dragan, Shira Bauman, Jodi Forlizzi, and Siddhartha~S Srinivasa.
\newblock Effects of robot motion on human-robot collaboration.
\newblock In \emph{Proceedings of the Tenth Annual ACM/IEEE International
  Conference on Human-Robot Interaction}, pages 51--58. ACM, 2015.

\bibitem[Dubuisson and Jain(1994)]{dubuisson1994modified}
Marie-Pierre Dubuisson and Anil~K Jain.
\newblock A modified hausdorff distance for object matching.
\newblock In \emph{Pattern Recognition, 1994. Vol. 1-Conference A: Computer
  Vision \&amp; Image Processing., Proceedings of the 12th IAPR International
  Conference on}, volume~1, pages 566--568. IEEE, 1994.

\bibitem[Ek et~al.(2007)Ek, Torr, and Lawrence]{ek2007gaussian}
Carl~Henrik Ek, Philip~HS Torr, and Neil~D Lawrence.
\newblock Gaussian process latent variable models for human pose estimation.
\newblock In \emph{Machine learning for multimodal interaction}, pages
  132--143. Springer, 2007.

\bibitem[Fox et~al.(1997)Fox, Burgard, and Thrun]{fox1997dynamic}
Dieter Fox, Wolfram Burgard, and Sebastian Thrun.
\newblock The dynamic window approach to collision avoidance.
\newblock \emph{IEEE Robotics \& Automation Magazine}, 4\penalty0 (1):\penalty0
  23--33, 1997.

\bibitem[Fragkiadaki et~al.(2015)Fragkiadaki, Levine, Felsen, and
  Malik]{fragkiadaki2015recurrent}
Katerina Fragkiadaki, Sergey Levine, Panna Felsen, and Jitendra Malik.
\newblock Recurrent network models for human dynamics.
\newblock In \emph{Proceedings of the IEEE International Conference on Computer
  Vision}, pages 4346--4354, 2015.

\bibitem[Fulgenzi et~al.(2007)Fulgenzi, Spalanzani, and
  Laugier]{fulgenzi2007dynamic}
Chiara Fulgenzi, Anne Spalanzani, and Christian Laugier.
\newblock Dynamic obstacle avoidance in uncertain environment combining pvos
  and occupancy grid.
\newblock In \emph{Robotics and Automation, 2007 IEEE International Conference
  on}, pages 1610--1616. IEEE, 2007.

\bibitem[Groetsch(1984)]{groetsch1984theory}
Charles~W Groetsch.
\newblock The theory of tikhonov regularization for fredholm equations of the
  first kind.
\newblock 1984.

\bibitem[Hawkins et~al.(2013)Hawkins, Vo, Bansal, and
  Bobick]{hawkins2013probabilistic}
Kelsey~P Hawkins, Nam Vo, Shray Bansal, and Aaron~F Bobick.
\newblock Probabilistic human action prediction and wait-sensitive planning for
  responsive human-robot collaboration.
\newblock In \emph{Humanoid Robots (Humanoids), 2013 13th IEEE-RAS
  International Conference on}, pages 499--506. IEEE, 2013.

\bibitem[Hofmann and Gavrila(2012)]{hofmann2012multi}
Michael Hofmann and Dariu~M Gavrila.
\newblock Multi-view 3d human pose estimation in complex environment.
\newblock \emph{International journal of computer vision}, 96\penalty0
  (1):\penalty0 103--124, 2012.

\bibitem[Kim et~al.(2014)Kim, Guy, Liu, Wilkie, Lau, Lin, and
  Manocha]{kim2014brvo}
Sujeong Kim, Stephen~J Guy, Wenxi Liu, David Wilkie, Rynson~WH Lau, Ming~C Lin,
  and Dinesh Manocha.
\newblock Brvo: Predicting pedestrian trajectories using velocity-space
  reasoning.
\newblock \emph{The International Journal of Robotics Research}, 2014.

\bibitem[Koppula and Saxena(2016)]{koppula2016anticipating}
Hema~S Koppula and Ashutosh Saxena.
\newblock Anticipating human activities using object affordances for reactive
  robotic response.
\newblock \emph{Pattern Analysis and Machine Intelligence, IEEE Transactions
  on}, 38\penalty0 (1):\penalty0 14--29, 2016.

\bibitem[Koppula et~al.(2016)Koppula, Jain, and
  Saxena]{koppula2016anticipatory}
Hema~S Koppula, Ashesh Jain, and Ashutosh Saxena.
\newblock Anticipatory planning for human-robot teams.
\newblock In \emph{Experimental Robotics}, pages 453--470. Springer, 2016.

\bibitem[Kuderer et~al.(2012)Kuderer, Kretzschmar, Sprunk, and
  Burgard]{kuderer2012feature}
Markus Kuderer, Henrik Kretzschmar, Christoph Sprunk, and Wolfram Burgard.
\newblock Feature-based prediction of trajectories for socially compliant
  navigation.
\newblock In \emph{Robotics: science and systems}. Citeseer, 2012.

\bibitem[Kurniawati et~al.(2011)Kurniawati, Du, Hsu, and
  Lee]{kurniawati2011motion}
Hanna Kurniawati, Yanzhu Du, David Hsu, and Wee~Sun Lee.
\newblock Motion planning under uncertainty for robotic tasks with long time
  horizons.
\newblock \emph{The International Journal of Robotics Research}, 30\penalty0
  (3):\penalty0 308--323, 2011.

\bibitem[Kurniawati et~al.(2012)Kurniawati, Bandyopadhyay, and
  Patrikalakis]{kurniawati2012global}
Hanna Kurniawati, Tirthankar Bandyopadhyay, and Nicholas~M Patrikalakis.
\newblock Global motion planning under uncertain motion, sensing, and
  environment map.
\newblock \emph{Autonomous Robots}, 33\penalty0 (3):\penalty0 255--272, 2012.

\bibitem[Mainprice and Berenson(2013)]{mainprice2013human}
Jim Mainprice and Dmitry Berenson.
\newblock Human-robot collaborative manipulation planning using early
  prediction of human motion.
\newblock In \emph{Intelligent Robots and Systems (IROS), 2013 IEEE/RSJ
  International Conference on}, pages 299--306. IEEE, 2013.

\bibitem[McHutchon and Rasmussen(2011)]{mchutchon2011gaussian}
Andrew McHutchon and Carl~E Rasmussen.
\newblock Gaussian process training with input noise.
\newblock In \emph{Advances in Neural Information Processing Systems}, pages
  1341--1349, 2011.

\bibitem[M{\"u}ller(2007)]{muller2007dynamic}
Meinard M{\"u}ller.
\newblock Dynamic time warping.
\newblock \emph{Information retrieval for music and motion}, pages 69--84,
  2007.

\bibitem[Nikolaidis et~al.(2013)Nikolaidis, Lasota, Rossano, Martinez,
  Fuhlbrigge, and Shah]{nikolaidis2013human}
Stefanos Nikolaidis, Przemyslaw Lasota, Gregory Rossano, Carlos Martinez,
  Thomas Fuhlbrigge, and Julie Shah.
\newblock Human-robot collaboration in manufacturing: Quantitative evaluation
  of predictable, convergent joint action.
\newblock In \emph{Robotics (ISR), 2013 44th International Symposium on}, pages
  1--6. IEEE, 2013.

\bibitem[Pan et~al.(2011)Pan, Chitta, and Manocha]{pan2011probabilistic}
Jia Pan, Sachin Chitta, and Dinesh Manocha.
\newblock Probabilistic collision detection between noisy point clouds using
  robust classification.
\newblock In \emph{International Symposium on Robotics Research (ISRR)}, 2011.

\bibitem[Park et~al.(2012)Park, Pan, and Manocha]{Park:2012:ICAPS}
Chonhyon Park, Jia Pan, and Dinesh Manocha.
\newblock {ITOMP}: Incremental trajectory optimization for real-time replanning
  in dynamic environments.
\newblock In \emph{Proceedings of International Conference on Automated
  Planning and Scheduling}, 2012.

\bibitem[Park et~al.(2013)Park, Pan, and Manocha]{GPUITOMP}
Chonhyon Park, Jia Pan, and Dinesh Manocha.
\newblock Real-time optimization-based planning in dynamic environments using
  {GPU}s.
\newblock In \emph{Proceedings of IEEE International Conference on Robotics and
  Automation}, 2013.

\bibitem[Park et~al.(2016)Park, Park, and Manocha]{parkfast}
Chonhyon Park, Jae~Sung Park, and Dinesh Manocha.
\newblock Fast and bounded probabilistic collision detection for high-dof
  robots in dynamic environments.
\newblock In \emph{Workshop on Algorithmic Foundations of Robotics}, 2016.

\bibitem[Park et~al.(2017{\natexlab{a}})Park, Park, and
  Manocha]{park2016efficient}
Jae~Sung Park, Chonhyon Park, and Dinesh Manocha.
\newblock Efficient probabilistic collision detection for non-convex shapes.
\newblock In \emph{Proceedings of IEEE International Conference on Robotics and
  Automation}. IEEE, 2017{\natexlab{a}}.

\bibitem[Park et~al.(2017{\natexlab{b}})Park, Park, and
  Manocha]{park2017intention}
Jae~Sung Park, Chonhyon Park, and Dinesh Manocha.
\newblock Intention-aware motion planning using learning based human motion
  prediction.
\newblock \emph{Proceedings of Robotics: Science and Systems},
  2017{\natexlab{b}}.

\bibitem[P{\'e}rez-D'Arpino and Shah(2015)]{perez2015fast}
Claudia P{\'e}rez-D'Arpino and Julie~A Shah.
\newblock Fast target prediction of human reaching motion for cooperative
  human-robot manipulation tasks using time series classification.
\newblock In \emph{2015 IEEE International Conference on Robotics and
  Automation (ICRA)}, pages 6175--6182. IEEE, 2015.

\bibitem[Petti and Fraichard(2005)]{SMP:2005}
S.~Petti and T.~Fraichard.
\newblock Safe motion planning in dynamic environments.
\newblock In \emph{Proceedings of IEEE/RSJ International Conference on
  Intelligent Robots and Systems}, pages 2210--2215, 2005.

\bibitem[Pri(2010)]{PrimeSense2010}
\emph{Prime Sensor™ NITE 1.3 Algorithms notes}.
\newblock PrimeSense Inc., 2010.
\newblock URL \url{http://www.primesense.com}.
\newblock Last viewed 19-01-2011 15:34.

\bibitem[Quigley et~al.(2009)Quigley, Conley, Gerkey, Faust, Foote, Leibs,
  Wheeler, and Ng]{quigley2009ros}
Morgan Quigley, Ken Conley, Brian Gerkey, Josh Faust, Tully Foote, Jeremy
  Leibs, Rob Wheeler, and Andrew~Y Ng.
\newblock Ros: an open-source robot operating system.
\newblock In \emph{ICRA workshop on open source software}, volume~3, page~5.
  Kobe, Japan, 2009.

\bibitem[Shotton et~al.(2013)Shotton, Sharp, Kipman, Fitzgibbon, Finocchio,
  Blake, Cook, and Moore]{shotton2013real}
Jamie Shotton, Toby Sharp, Alex Kipman, Andrew Fitzgibbon, Mark Finocchio,
  Andrew Blake, Mat Cook, and Richard Moore.
\newblock Real-time human pose recognition in parts from single depth images.
\newblock \emph{Communications of the ACM}, 56\penalty0 (1):\penalty0 116--124,
  2013.

\bibitem[Sisbot et~al.(2007)Sisbot, Marin-Urias, Alami, and
  Simeon]{sisbot2007human}
Emrah~Akin Sisbot, Luis~F Marin-Urias, Rachid Alami, and Thierry Simeon.
\newblock A human aware mobile robot motion planner.
\newblock \emph{Robotics, IEEE Transactions on}, 23\penalty0 (5):\penalty0
  874--883, 2007.

\bibitem[Snelson and Ghahramani(2005)]{snelson2005sparse}
Edward Snelson and Zoubin Ghahramani.
\newblock Sparse gaussian processes using pseudo-inputs.
\newblock In \emph{Advances in neural information processing systems}, pages
  1257--1264, 2005.

\bibitem[Sutton and Barto(1998)]{sutton1998reinforcement}
Richard~S Sutton and Andrew~G Barto.
\newblock \emph{Reinforcement learning: An introduction}.
\newblock MIT press, 1998.

\bibitem[Turaga et~al.(2008)Turaga, Chellappa, Subrahmanian, and
  Udrea]{turaga2008machine}
Pavan Turaga, Rama Chellappa, Venkatramana~S Subrahmanian, and Octavian Udrea.
\newblock Machine recognition of human activities: A survey.
\newblock \emph{Circuits and Systems for Video Technology, IEEE Transactions
  on}, 18\penalty0 (11):\penalty0 1473--1488, 2008.

\bibitem[Unhelkar et~al.(2015)Unhelkar, P{\'e}rez-D’Arpino, Stirling, and
  Shah]{unhelkar2015human}
Vaibhav~V Unhelkar, Claudia P{\'e}rez-D’Arpino, Leia Stirling, and Julie~A
  Shah.
\newblock Human-robot co-navigation using anticipatory indicators of human
  walking motion.
\newblock In \emph{Robotics and Automation (ICRA), 2015 IEEE International
  Conference on}, pages 6183--6190. IEEE, 2015.

\bibitem[Urtasun et~al.(2006)Urtasun, Fleet, and Fua]{urtasun20063d}
Raquel Urtasun, David~J Fleet, and Pascal Fua.
\newblock 3d people tracking with gaussian process dynamical models.
\newblock In \emph{Computer Vision and Pattern Recognition, 2006 IEEE Computer
  Society Conference on}, volume~1, pages 238--245. IEEE, 2006.

\bibitem[van~den Berg et~al.(2008)van~den Berg, Patil, Sewall, Manocha, and
  Lin]{van2008interactive}
Jur van~den Berg, Sachin Patil, Jason Sewall, Dinesh Manocha, and Ming Lin.
\newblock Interactive navigation of multiple agents in crowded environments.
\newblock In \emph{Proceedings of the 2008 symposium on Interactive 3D graphics
  and games}, pages 139--147. ACM, 2008.

\bibitem[Van Den~Berg et~al.(2012)Van Den~Berg, Patil, and
  Alterovitz]{van2012motion}
Jur Van Den~Berg, Sachin Patil, and Ron Alterovitz.
\newblock Motion planning under uncertainty using iterative local optimization
  in belief space.
\newblock \emph{The International Journal of Robotics Research}, 31\penalty0
  (11):\penalty0 1263--1278, 2012.

\bibitem[Vasquez et~al.(2009)Vasquez, Fraichard, and
  Laugier]{vasquez2009growing}
Dizan Vasquez, Thierry Fraichard, and Christian Laugier.
\newblock Growing hidden markov models: An incremental tool for learning and
  predicting human and vehicle motion.
\newblock \emph{The International Journal of Robotics Research}, 2009.

\bibitem[Zhu and Hastie(2012)]{zhu2012kernel}
Ji~Zhu and Trevor Hastie.
\newblock Kernel logistic regression and the import vector machine.
\newblock \emph{Journal of Computational and Graphical Statistics}, 2012.

\bibitem[Ziebart et~al.(2008)Ziebart, Maas, Bagnell, and
  Dey]{ziebart2008maximum}
Brian~D Ziebart, Andrew~L Maas, J~Andrew Bagnell, and Anind~K Dey.
\newblock Maximum entropy inverse reinforcement learning.
\newblock In \emph{AAAI}, pages 1433--1438, 2008.

\end{thebibliography}

\end{document}